\definecolor{ToDoColor}{rgb}{0.1,0.2,1}
\newcommand{\astar}{\textsc{a*}}
\newcommand{\mafs}{\textsc{mafs}}
\newcommand{\masas}{\textsc{mad-a*}}
\newcommand{\mastrips}{\textsc{ma-strips}}
\newcommand{\strips}{\textsc{strips}}
\newcommand{\mafd}{\textsc{ma-fd}}
\newcommand{\ppastar}{\textsc{pp-a*}}
\newcommand{\prune}{\rho}
\begin{document}

\title{Distributed Heuristic Forward Search for Multi-Agent Systems}

\author{\name Raz Nissim \email raznis@cs.bgu.ac.il \\
       \name Ronen Brafman \email brafman@cs.bgu.ac.il \\
       \addr Ben-Gurion University of the Negev,\\
       Be'er Sheva, Israel}


\maketitle

\begin{abstract}
This paper describes a number of distributed forward search algorithms for solving multi-agent planning problems. We introduce a distributed formulation of non-optimal forward search, as well as an optimal version, \masas. Our algorithms exploit the structure of multi-agent problems to not only distribute the work efficiently among different agents, but also to remove symmetries and reduce the overall workload. The algorithms ensure that private information is not shared among agents, yet computation is still efficient -- outperforming current state-of-the-art distributed planners, and in some cases even centralized search -- despite the fact that each agent has access only to partial information. 
\end{abstract}

\section{Introduction}
\label{Introduction}
Interest in multi-agent systems is constantly rising, and examples of virtual and real systems abound, with virtual social communities providing many such instances. The ability to plan for such systems and the ability of such systems to autonomously plan for themselves is an important challenge for AI, especially as the size of these systems can be quite large.
In this context, a fundamental question is how to perform distributed planning for a distributed multi-agent system efficiently, and in many cases, how to do it while preserving privacy. 

Distributed planning is interesting for a number of reasons. Scientifically and intellectually, it is  interesting to seek distributed algorithms for fundamental computational tasks, such as classical planning. Similarly, it is  interesting (and likely very useful in the long term) to seek distributed versions of fundamental tools in computer science, and search is definitely such a tool. Moreover, there are pragmatic reasons
for seeking distributed algorithms. As an example, imagine a setting in which different manufacturers or service providers can publish their capabilities and then collaborate with each other to provide new products or services that none of them can provide alone. Such providers will certainly 
need to reveal some sort of public interface, describing what they can contribute to others, as well as what they require from others.
But most likely, they will not want to describe their inner workings: their internal state and how they can manipulate it (e.g., their current stock levels, machinery, logistics capabilities, personnel, other commitments, etc.). This is usually confidential proprietary information that an agent would not want to reveal, although clearly one must reason about it during the planing process. 

In principle, the above problem can be addressed using a central trusted party running a suitable planning algorithm.
However, such a trusted party may not exist in all settings. Moreover, centralized planning puts the entire computational burden on a single agent, rather than distribute it across the system. Thus, centralized algorithms are less robust to agent failures, and sometimes less efficient.
For these reasons, distributed algorithms are often sought, and in our case in particular, distributed, privacy preserving algorithms.
Indeed, this is the main motivation for the field of distributed algorithms, and in particular, the work on distributed constraint satisfaction problems (CSP) \cite{YokooDIK98,MeiselsBook}. 

Yet another motivation for distributed algorithms is provided by planning domains in which search operators that correspond to actions are implemented using complex simulation software that is accessible to the relevant agent only because that agent is not interested in sharing it (due to privacy concerns or commercial interests) or because it is not realistic to transfer, implement, and appropriately execute such software as part of a planning algorithm. As an example, consider planning by a group of robotic agents, each
with different capabilities. Each agent has a simulator that can compute the effect of its actions, which the agents do not want to share with
each other. Thus, the application of each agent's actions during  search can only be done by the agents themselves

Moreover, it is often the case that good distributed algorithms formulated for a cooperative team provide the foundation for algorithms and mechanisms for solving similar problems for teams of self-interested agents. For example, the work on planning games \cite{BrafmanDET09,BrafmanDET10} suggests modified versions of an earlier algorithm for cooperative multi-agent systems~\cite{Brafman200828} and work on mechanism design for solving distributed CSPs by self-interested agents \cite{PetcuFP08} is based on earlier work in distributed CSPs for cooperative teams \cite{PetcuF05}. Finally, work on distributed algorithms can lead to insights on problem factoring and abstraction, as we shall demonstrate later on in this paper -- presenting a new effective pruning technique for centralized planning that is an outgrowth of our work on distributed search.


There is a long tradition of work on multi-agent planning for cooperative and non-cooperative agent teams involving centralized and distributed algorithms, often using involved models that model uncertainty, resources, and more~\cite{Nilsson1980,HansenZ01a,BernsteinGIZ02,SzerCZ05}, and much work on how to coordinate the local plans of agents or to allow agents to plan locally under certain constraints~\cite{CoxD05,SteenhuisenWMV06,MorsVW04,MorsW05}.
However, our starting point is a more basic, and hence, we believe, more
fundamental model introduced by Brafman and Domshlak (BD) which offers what is possibly the simplest model of MA planning -- \mastrips\ \cite{Brafman200828}. \mastrips\ minimally extends standard \strips\ (or PDDL) models by specifying a set of agent ids, and associating each action in the domain with one of the agents. Thus, essentially, it partitions the set of actions among the set of agents. We believe that this model serves as a skeleton model for most work on multi-agent planning, and that the insights gained from it can help us address more involved models as well. 

Distributed planning can easily be performed in \mastrips\ using existing distributed planning algorithms~\cite{VrakasRV01,KishimotoFB09,BurnsLRZ10} applied to the underlying \strips\ planning problem (i.e., where we ignore agent identities). However, these algorithms were devised to speed-up the solution of centralized planning problems given access to a distributed computing environment, such as a large cluster. However, these algorithms do not "respect" the inherent distributed form of the problem, giving all agents access to all
actions and hence do not preserve privacy. Nor can they be used when search operators cannot be shared by agents, as in the scenario described earlier.

Recently, a number of algorithms that maintain agent privacy and utilize the inherent distributed structure of the system have emerged.
The most natural approach is based on distributed CSP techniques and was introduced in BD's original work. BD formulate a particular
CSP that is particularly suited for \mastrips\ problems whose solution is a plan. This algorithm can be transformed into a fully distributed algorithm simply by using a distributed CSP solver. Unfortunately, distributed CSP solvers cannot handle even the smallest instances of MA planning problems. Consequently, a dedicated algorithm, based on the ideas of BD, \emph{Planning-First}, was developed \cite{NissimBD10}. While performing well on some domains, this algorithm had trouble scaling up to problems in which each agent had to execute more than a small number of actions. (Indeed, BD's algorithm scales exponentially with the minimal number of actions per agent in the solution plan.) Recently, a new, improved algorithm, based on partial-order planning, \textsc{map-pop},  was developed by \cite{TorrenoOS12}. Yet, this algorithm, too, leaves a serious gap between what we can solve using a distributed planner and what can solved using a centralized planner. Moreover, \emph{neither algorithm attempts to generate a cost-optimal plan}.

In single-agent planning, constraint-based and partial-order planning techniques are currently dominated by heuristic forward search techniques. Thus, it is natural to ask whether it is possible to formulate a distributed heuristic forward search algorithm for distributed planning. This paper provides a positive answer to this question in the form of a general approach to distributed search in which each agent performs
only the part of the state expansion relevant to it. The resulting algorithms are very simple and very efficient -- outperforming previous algorithms by orders of magnitude -- and offer similar flexibility to that of forward-search based algorithms for single-agent planning.
They respect the natural distributed structure of the system, and thus allow us to formulate privacy preserving versions.

One particular variant of our general approach yields a distributed version of the \astar\ algorithm, called \masas, using which we obtain a general, efficient distributed algorithm for \emph{optimal} planning. \masas\ solves a more difficult problem than centralized search because in
the privacy preserving setting, each agent has less knowledge than a centralized solver. Yet, it is able to solve some problems centralized \astar\ cannot solve. As we will show, the main reason for this speed-up is an interesting optimality preserving pruning technique that is naturally
built into our search approach. This insight has led to to a new effective pruning technique for centralized  search that we shall describe later on.

The rest of this paper is organized as follows. The next section presents the model we use and related work. Section \ref{FSalgorithm} describes our MA forward search algorithm, and Section \ref{sec:optimal} describes \masas, a modified version which maintains optimality. We next present the MA planning framework \mafd, and empirical results for both \mafs\ and \masas. Section \ref{sec:partition} shows how we
can exploit insights from our distributed search methods to obtain a new effective pruning methods for centralized search. Section \ref{discussion} concludes the paper with a discussion.

\section{Background}
\label{background}


A \mastrips\ problem \cite{Brafman200828} for a set of agents $\Phi=\{\varphi_i\}_{i=1}^{k}$, is given by a 4-tuple $\Pi = \langle P,\{A_i\}_{i=1}^{k},I,G \rangle$, where $P$ is a finite set of propositions, $I\subseteq P$ and $G\subseteq P$ encode the initial state and goal, respectively, and for $1\leq i\leq k$, $A_i$ is the set of actions agent $\varphi_i$ is capable of performing. Each action $a=\langle \mathrm{pre}(a),\mathrm{eff}(a) \rangle$ is given by its preconditions and effects. A {\em plan\/} is a solution to $\Pi$ iff it is a solution to the underlying \strips\ problem obtained by ignoring the identities of the agent associated with each action. Since each action is associated with an agent, a plan tells each agent what to do and when to do it. In different planning contexts, one  might seek special types of solutions. For example, in the context of planning games \cite{BrafmanDET09}, \emph{stable} solutions are sought. We focus on cooperative multi-agent systems, seeking either a (standard) solution or a cost-optimal solution.


The partitioning of the actions to agents yields a distinction between private and public propositions and actions. A \textit{private} proposition of agent $\varphi$ is required and affected only by the actions of $\varphi$. An action is \textit{private} if all its preconditions and effects are private. All other actions are classified as \textit{public}. That is, $\varphi$'s private actions affect and are affected only by $\varphi$'s actions, while its public actions may require or affect the actions of other agents. For ease of the presentation of our algorithms and their proofs, we assume that all actions that achieve a goal condition are considered \textit{public}. Our methods are easily modified to remove this assumption.

We note that while the notion of private\slash public is natural to the \mastrips\ encoding, it can easily be applied in models having multi-valued variables. For example, in SAS+, where each variable may have multiple values, the analogous of a (boolean) proposition in \mastrips\ is a $\langle \mathrm{variable},\mathrm{value} \rangle$ pair. Such a pair is considered private if it is required, achieved or destroyed only by the actions of a single agent. Consequently, actions which require, achieve or destroy only private $\langle \mathrm{variable},\mathrm{value} \rangle$ pairs are considered private. For clarity and consistency with previous work we use \mastrips\ notation when discussing the theoretical aspects of our work. However, the examples given, as well the practical framework we present for MA planning, use the more concise multi-valued variables SAS+ encoding.

In a distributed system of fully-cooperative agents privacy is not an issue, and so the distinction between private and public actions is not essential, although it can be exploited for computational gains~\cite{Brafman200828}. However, there are settings in which agents collaborate on a specific task, but prefer not to reveal private information about their local states, their private actions, and the cost of these private actions.
They wish only to make their public interface known -- i.e., the public preconditions and effects of their actions. 

This setting is the planning equivalent to the area of distributed CSPs, where agents must coordinate (e.g., schedule a meeting) while keeping certain constraints and private variables private. We will refer to algorithms that plan without revealing this information as \emph{privacy preserving} (distributed) planning algorithms. More specifically, in a privacy-preserving algorithm the only information available about an agent to others is its set of public actions, projected onto public propositions. This can be viewed as the interface between the agents. Information about an agent's private actions and private aspects of a public action are known to the agent only. 

Given a model of a distributed system such as \mastrips, it is natural to ask how to search for a solution. The best known example of distributed search is that of distributed CSPs \cite{YokooDIK98}, and various search techniques and heuristics have been developed for it \cite{MeiselsBook}. Planning problems can be cast as CSP problems (given some bound on the number of actions), and the first attempt to solve \mastrips\ problems was based on a reduction to distributed CSPs. More specifically, Brafman and Domshlak  introduced the \textit{Planning as CSP+Planning} methodology for planning by a system of cooperative agents with private information. This approach separates the public aspect of the problem, which involves finding public action sequences that satisfy a certain distributed CSP, from the private aspect, which ensures that each agent can actually execute these public actions in a sequence. Solutions found are locally optimal, in the sense that they minimize $\delta$, the maximal number of public actions performed by an agent. This methodology was later extended to the first fully distributed MA algorithm for \mastrips\ planning, \textit{Planning-First} \cite{NissimBD10}. \textit{Planning First} was shown to be efficient in solving problems where the agents are very loosely coupled, and where $\delta$ is very low. However, it does not scale up as $\delta$ rises, mostly due to the large search space of the distributed CSP. Recently, a distributed planner based on partial order planning was introduced \cite{TorrenoOS12}, which outperforms Planning First, effectively solving more tightly coupled problems. Both methods are privacy preserving, but do not guarantee cost-optimal solutions.

\section{Multi-Agent Forward Search}
\label{FSalgorithm}

This section describes our distributed variant of forward best-first search, which we call \mafs. We begin with the algorithm itself, including an overview and pseudo-code. We next provide an example of the flow of \mafs, and a discussion of its finer points.

\subsection{The MAFS Algorithm}
\label{the-algorithm}

Algorithms \ref{alg:forward-search}-\ref{alg:expand} depict the \mafs\ algorithm for agent $\varphi_i$. 
In \mafs, a separate search space is maintained for each agent. Each agent maintains an \textit{open list} of states that are candidates for expansion and a \textit{closed list} of already expanded states. It expands the state with the minimal $f$ value in its open list. When an agent expands state $s$, {\em it uses its own operators only.} This means two agents expanding the same state will generate \textit{different} successor states. 

Since no agent expands all relevant search nodes, messages must be sent between agents, informing one agent of open search nodes relevant to it expanded by another agent. Agent $\varphi_i$ characterizes state $s$ as relevant to agent $\varphi_j$ if $\varphi_j$ has a public operator whose public preconditions (the preconditions $\varphi_i$ is aware of) hold in $s$, and the creating action of $s$ is public. In that case, Agent $\varphi_i$ will send $s$ to Agent $\varphi_j$. 

\begin{algorithm}
\caption{\mafs\ for agent $\varphi_i$}
    \label{alg:forward-search}
\begin{algorithmic}[1]  
\WHILE{did not receive \textbf{true} from a solution verification procedure}
	\FORALL{messages $m$ in message queue}
		\STATE \textbf{process-message($m$)}
	\ENDFOR
	\STATE $s\leftarrow$ \textbf{extract-min}(open list) \label{line:extract_min}
	\STATE \textbf{expand($s$)}
\ENDWHILE
\end{algorithmic}
\end{algorithm}
\begin{algorithm}
\caption{process-message($m=\langle s,g_{\varphi_j}(s),h_{\varphi_j}(s)\rangle$)}
    \label{alg:process-message}
\begin{algorithmic}[1]  
\IF{$s$ is not in open or closed list \textbf{or} $g_{\varphi_i}(s)>g_{\varphi_j}(s)$  } 
	\STATE add $s$ to open list \textbf{and} calculate $h_{\varphi_i}(s)$
	\STATE $g_{\varphi_i}(s)\leftarrow g_{\varphi_j}(s)$
	\STATE $h_{\varphi_i}(s)\leftarrow max(h_{\varphi_i}(s),h_{\varphi_j}(s))$\label{line:update-h}
\ENDIF
\end{algorithmic}
\end{algorithm}
\begin{algorithm}
\caption{expand($s$)}
    \label{alg:expand}
\begin{algorithmic}[1]  
\STATE move $s$ to closed list
\IF{$s$ is a goal state}
	\STATE broadcast $s$ to all agents \label{line:broadcast1}
	\STATE initiate verification of $s$ as a solution	\label{line:broadcast2}
	\RETURN
\ENDIF
\FORALL{agents $\varphi_j \in \Phi$}\label{line:sending-forloop}
	\IF{the last action leading to $s$ was public \textbf{and} $\varphi_j$ has a public action for which all public preconditions hold in $s$} \label{line:relevant}
		\STATE send $s$ to $\varphi_j$	\label{line:send}
	\ENDIF
\ENDFOR
\STATE apply $\varphi_i$'s successor operator to $s$ \label{line:succ}
\FORALL{successors $s'$}\label{line:successors-forloop}
	\STATE update $g_{\varphi_i}(s')$ and calculate $h_{\varphi_i}(s')$
	\IF{$s'$ is not in closed list \textbf{or} $f_{\varphi_i}(s')$ is now smaller than it was when $s'$ was moved to closed list}
		\STATE move $s'$ to open list \label{line:move-to-open-list}
	\ENDIF
\ENDFOR
\end{algorithmic}
\end{algorithm}


The messages sent between agents contain the full state $s$, i.e.\ including both public and private variable values, as well as the cost of the best plan from the initial state to $s$ found so far, and the sending agent's heuristic estimate of $s$.
 When agent $\varphi$ receives a state via a message, it checks whether this state exists in its open or closed lists. If it does not appear in these lists, it is inserted into the open list. If a copy of this state with higher $g$ value exists, its $g$ value is updated, and if it is in the closed list, it is reopened. Otherwise, it is discarded. Whenever a received state is (re)inserted into the open list, the agent computes its local $h$ value for this state, and then can choose between\slash combine the value it has calculated  and the $h$ value in the received message. If both heuristics are known to be admissible, for example, the agent could choose the maximal of the two estimates, as is done in Line \ref{line:update-h} of Algorithm \ref{alg:process-message}.

Once an agent expands a \emph{solution} state $s$, it sends $s$ to all agents and awaits their confirmation. For simplicity, and in order to avoid deadlock, once an agent either broadcasts or confirms a solution, it is not allowed to create new solutions. If a solution is found by more than one agent, the one with lower cost is chosen, and ties are broken by choosing the solution of the agent having the lower ID. When the solution is confirmed by all agents, the agent initiates the trace-back of the solution plan. This is also a distributed process, which involves all agents that perform some action in the optimal plan. The initiating agent begins the trace-back, and when arriving at a state received via a message, it sends a trace-back message to the sending agent. This continues until arriving at the initial state. When the trace-back phase is done, a terminating message is broadcasted and the solution is outputted.

As we will see, this general and simple scheme -- apply your own actions/operators only and send relevant generated nodes to other agents -- can be used to distribute other search algorithms. However, there are various subtle points pertaining to message sending and termination that influence the correctness and efficiency of the distributed algorithm, which we discuss later.

To better demonstrate the flow of the algorithm, consider the example given in Figure \ref{fig:example}. In this example,  we have two agents who must cooperate in order to achieve the goal. The agents' actions are described on the left-hand side, where every node in the graph depicts an action, and an edge $(u,v)$ indicates that $u$ either achieves or destroys a precondition of $v$. There are two public actions $a_5,a_8$, which affect\slash depend on the only public variable, $v_4$, while the rest of the actions are private. In the initial state, all variable values are zero (i.e., $I=0000$), and the goal is $G=\{v_4=2\}$. When the agents begin searching, each applies its own actions only. Therefore, agent 2 quickly exhausts its search space, since as far as it's concerned, state $0020$ is a dead end. Agent 1 generates its search space, until it applies public action $a_5$, which results in state $s=2201$. $s$ is then sent to agent 2, since all the public preconditions of $a_8$ hold in $s$ (Line \ref{line:relevant} of Algorithm \ref{alg:expand}). Upon receiving $s$, agent 2 continues applying its actions, eventually reaching the goal state, which is then broadcasted.

\begin{figure*}[t]
  \caption{Description of the actions of an example planning problem, its reachable search space, and the search space generated by \mafs. Actions are represented as $<\textit{pre},\textit{eff}>$ and states are denoted by the values of variables $v_1,v_2,v_3,v_4$ respectively (For example, $1122$ denotes the state where $v_1=1,v_2=1,v_3=2,v_4=2$.).}
\label{fig:example}
\hspace{-40pt}
  \centering
    \includegraphics[width=1.025\textwidth]{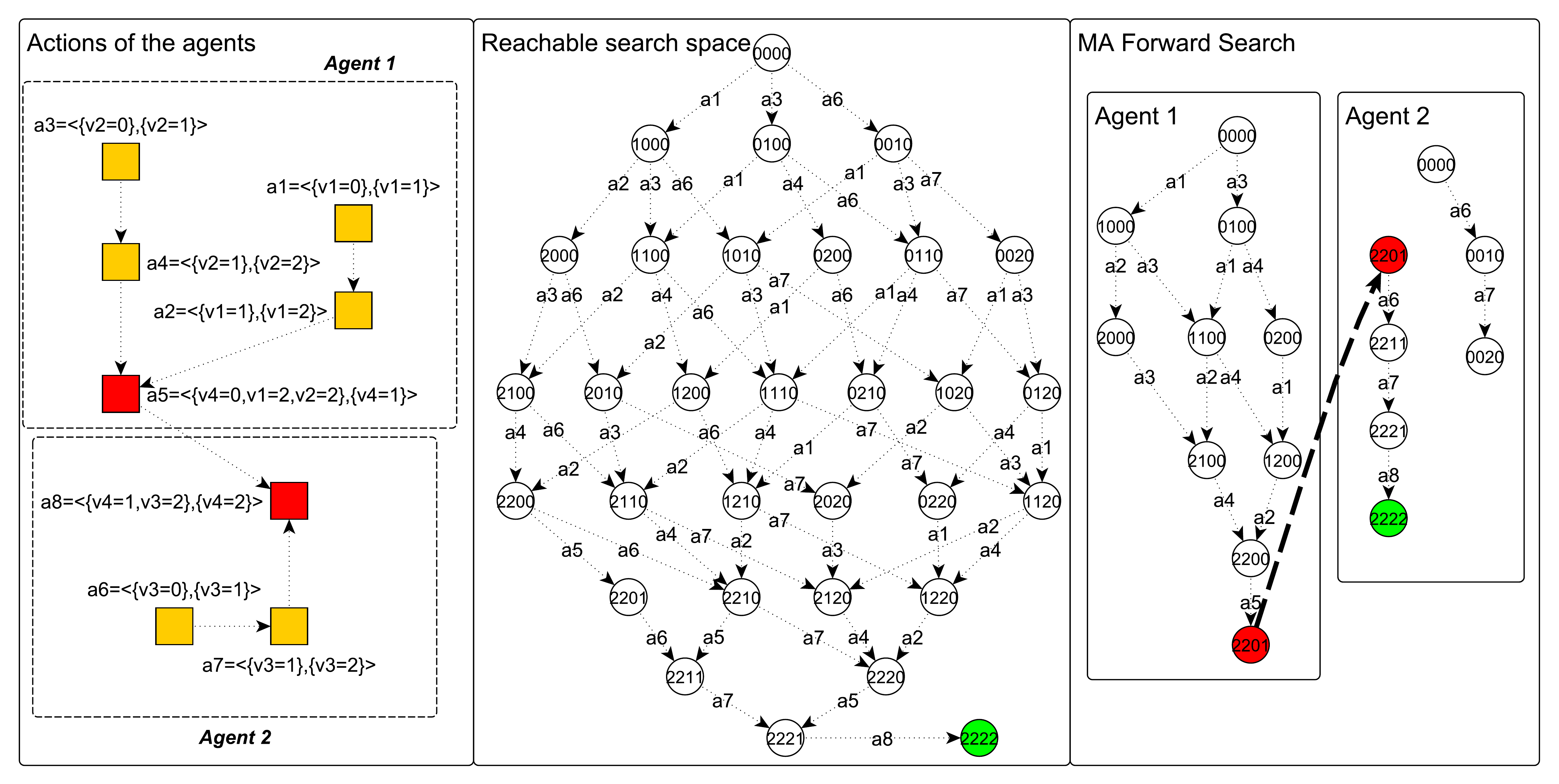}
\hspace{-40pt}
\end{figure*}

\subsection{Discussion}
We now discuss some of the more subtle points of \mafs.

\subsubsection{Preserving Agent Privacy}
\label{sec:privacy}
If our goal is to preserve privacy, it may appear that \mafs\ agents are revealing their private data because they transmit their private state in their messages. Yet, in fact, this information is not used by any of the other agents, nor is it altered. It is  simply copied to future states to be used only by the agent. 
Since private state data is used only as an ID, the agents can encrypt this data and keep a table locally, which maps IDs to private states. If this encryption can generate multiple possible IDs for each private state, other agents cannot identify other agents' private states.
 The issue of privacy is discussed further in Section \ref{sec:discussion_privacy}.

To compute heuristic estimates of states it receives, an agent must assess the effort required to achieve the goal from them.
To do this, it needs  some information about the
effort required of other agents to construct their part of the plan. 
In a fully cooperative setting, an agent can have access to the full
description of other agents' actions. In the privacy preserving setting, two issues arise. First, agents have only partial information about
other agents' capabilities -- they only have access to their public interface. Second, different agents may compute different heuristic
estimates of the same state because each agent has full information about its capabilities, but not about those of the others.
This issue does not affect the actual algorithm, which is agnostic to how agents compute their heuristic estimate,
although the fact that agents have less information can lead to poorer heuristic estimates. On the other hand, agents are free to
use different heuristic functions, and as we will demonstrate empirically,
 using the public interfaces only, we are still able to efficiently solve planning problems.



When a state is reached via a message, it includes the sending agent's heuristic estimate. Therefore, the receiving agent now has two (possibly different) estimates it can use. If the heuristics are known to be admissible, then clearly the maximal (most accurate) value is taken, as in line \ref{line:update-h} of Algorithm \ref{alg:process-message}. If not, the agent is free to decide how to use these estimates, depending on their known qualities.

\subsubsection{Relevancy and Timing of the Messages}
State $s$ is considered relevant to agent $\varphi_j$ if it has a public action for which all public preconditions hold in $s$ and the last action leading to $s$ was public (line \ref{line:relevant} of Algorithm \ref{alg:expand}). This means that all states that are products of private actions are considered irrelevant to other agents. As it turns out, since private actions do not affect other agent's capability to perform actions, an agent must send only states in which the last action performed was public, in order to maintain completeness (and optimality, as proved in the next section). Regarding states that are products of private actions as irrelevant decreases communication, while effectively pruning large, symmetrical parts of the search space. In fact, we will show in Section \ref{sec:partition} how this property of \mafs\ can be used to obtain state-space pruning in
centralized planning algorithms, using a method called \emph{Partition-based pruning}. 

As was hinted earlier, there exists some flexibility regarding when these relevant states are sent. Centralized search can be viewed as essentially ``sending'' every state (i.e., inserting it to its open list) once it is generated. In \mafs, relevant states can be sent when they are expanded (as in the pseudo-code) or once they are generated (changing Algorithm \ref{alg:expand} by moving the for-loop on line \ref{line:sending-forloop} inside the for-loop on line \ref{line:successors-forloop}). The timing of the messages is especially important in the distributed setting since agents may have different heuristic estimations. Sending the messages once they are generated increases communication, but allows for states that are not considered promising by some agent to be expanded by another agent in an earlier stage. Sending relevant states when they are expanded, on the other hand, decreases communication, but delays the sending of states not viewed as promising. Experimenting with the two options, we found that the lazy approach, of sending the messages only when they are expanded, dominates the other, most likely because communication can be costly.

\subsubsection{Robustness}
One of the main motivations for distributed problem solving is \emph{robustness}. We discuss robustness with respect to agent failure, or the ability of the algorithm to handle the failure of one or more of the computing agents. In the centralized case, where only one computing agent exists, its failure means the inability to solve the problem. In the case of \mafs, the algorithm can still find solutions even if a group of agents fails to perform their computation. To do this, the agents simply need to ignore all states in which the failing agent participated in the plan leading up to them. If this is done, the algorithm will find a solution excluding the failed agent.

For this to be done, the agents must additionally identify each state \emph{s} with the set of agents participating in  plans leading to up to \emph{s}. If \emph{s} can be reached via two paths having different participating agents sets, \emph{s} is duplicated, in order to maintain completeness. If agent $\varphi$ fails, all agents remove from their open lists all states in which $\varphi$ is a participating agent, and ignore any such states arriving in future messages. This simple alteration of \mafs\ guarantees that if a solution excluding the failed agent exists, it is found.

\subsubsection{Search Using Complex Actions}


In Section \ref{Introduction}, we mentioned a scenario where search operators corresponding to real-world actions are implemented using complex simulation software. This situation can arise, for example, with a team of heterogeneous robotic agents, each of which has a dedicated simulator of its actions. There are natural cases in which such agents are unlikely to want to share such generative models. For example, imagine a robotic team where different robots are supplied by different manufacturers, e.g., some are autonomous heavy equipment, while others are humanoid robots, and yet others are drones. Simulation software for such robots is usually complex and proprietary. It is unlikely that agents would want to share it, although they most likely have no problem advertising their capabilities. Moreover, in the case of ad-hoc teams, transferring and installing such software is unlikely to work online. (Any person who attempted to install such simulators knows how sensitive they can be to the particular computing environment they run on). 

Our approach is well suited for such settings: First, forward search methods are capable of using generative, rather than declarative models of the agent's actions, as their central steps involves the generation of successor states and their insertion into appropriate queues. They are oblivious as to how the operators are described or implemented, as long as successor states can be generated. Second, our approach respects the natural system structure, and each agent need only apply its own operators. Thus, there is no need to share the generative models amongst the agents.

One problem, however, with generative operators is the fact that most contemporary methods for generating heuristic functions for generated states require either a declarative \strips-like description or a generative model (in the case of sampling methods). Fortunately, our empirical results indicate that the use of an approximate model works quite well in practice. Indeed, our approach assumes that other agents use only the public part of an agent's action model, which is only an approximation. Even if the original action model is generative, a declarative approximate model can be constructed using learning techniques~\cite{YangWJ07}. Alternatively, sampling methods could use a suitably developed simplified simulator.

\section{Optimal MAFS}
\label{sec:optimal}
\mafs\, as presented, is not an optimal planning algorithm. It can, however, be slightly modified in order to achieve optimality. We now describe these modifications, which result in a MA variation of \astar\ we refer to as Multi-Agent Distributed \astar\ (\masas).

As in \astar, the state chosen for expansion by each agent must be the one with the lowest $f=g+h$ value in its open list, where the heuristic estimates are admissible. In \masas, therefore, \textit{extract-min} (Line \ref{line:extract_min} in Algorithm \ref{alg:forward-search}) must return this state. 

\subsection{Termination Detection}
Unlike in \astar, expansion of a goal state in \mafs\ does not necessarily mean an optimal solution has been found. In our case, a solution is known to be optimal only if all agents prove it so. Intuitively, a solution state $s$ having solution cost $f^*$ is known to be optimal if there exists no state $s'$ in the open list or the input channel of some agent, such that $f(s')<f^*$. In other words, solution state $s$ is known to be optimal if $f(s)\leq f_{\mathrm{lower-bound}}$, where $f_{\mathrm{lower-bound}}$ is a lower bound on the $f$-value of the entire system (which includes all states in all open lists, as well as states in messages that have not been processed, yet). 

To detect this situation, we use Chandy and Lamport's \textit{snapshot algorithm} \cite{ChandyL85}, which enables a process to create an approximation of the global state of the system, without ``freezing'' the distributed computation. Although there is no guarantee that the computed global state actually occurred, the approximation is good enough to determine whether a stable property currently holds in the system. A property of the system is \textit{stable} if it is a global predicate which remains true once it becomes true. Specifically, properties of the form $f_{\mathrm{lower-bound}}\geq c$ for some fixed value $c$, are stable when $h$ is a \textit{globally consistent} heuristic function. That is, when $f$ values cannot decrease along a path. In our case, this path may involve a number of agents, each with its $h$ values. If each of the local functions $h_\varphi$ are consistent, and agents apply the $\max$ operator when receiving a state via a message (known as \emph{pathmax}), this property holds\footnote{Although recent work \cite{Holte10} shows that pathmax does not necessarily make a \emph{bona-fide} consistent heuristic, pathmax does ensure that $f$-values along a path are non-decreasing.}.

We note that for simplicity of the pseudo-code we omitted the detection of a situation where a goal state does not exist. This can be done by determining whether the stable property \textit{``there are no open states in the system''} holds, using the same \textit{snapshot algorithm}.

\subsection{Proof of Optimality}
\label{optimality}

\newtheorem{lemma}{Lemma}
\newtheorem{theorem}{Theorem}
\newtheorem{corollary}{Corollary}

We now prove the optimality of \masas. We must note that as it is presented, \masas\ maintains completeness (and optimality) only if all actions which achieve some goal condition are considered public. This property is assumed throughout this section, but the algorithm is easily modified to remove it. We begin by proving the following lemmas regarding the solution structure of a MA planning problem. 
\begin{lemma}
\label{lem:flipping}
Let $P=(a_1,a_2\dots,a_{k})$ be a legal plan for a MA planning problem $\Pi$. Let $a_i,a_{i+1}$ be two consecutive actions taken in $P$ by different agents, of which at least one is private. Then $P'=(a_1,\dots,a_{i+1},a_i,\dots,a_{k})$ is a legal plan for $\Pi$ and $P(I)=P'(I)$.
\end{lemma}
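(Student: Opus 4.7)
The plan is to use the definition of private action to show that the two consecutive actions are essentially independent, so their order is immaterial. Without loss of generality, assume $a_i$ is the private action (the case where $a_{i+1}$ is private is symmetric), and let $\varphi$ be its owner, with $a_{i+1}$ belonging to some agent $\varphi' \neq \varphi$. Because $a_i$ is private, every precondition and effect of $a_i$ is a private proposition of $\varphi$, and by definition such a proposition is required and affected only by actions of $\varphi$. Since $a_{i+1} \in A_{\varphi'}$ with $\varphi' \neq \varphi$, no precondition or effect of $a_{i+1}$ can be a precondition or effect of $a_i$. Thus the preconditions and effects of $a_i$ and $a_{i+1}$ are disjoint.

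Next I would verify that $P'$ is executable from $I$. Let $s_{j}$ denote the state reached after applying $a_1,\dots,a_j$ in $P$. For $j < i$, $P$ and $P'$ agree, so the prefix is legal and reaches $s_{i-1}$. To fire $a_{i+1}$ in $P'$ at this point, I need $\mathrm{pre}(a_{i+1}) \subseteq s_{i-1}$. Since $P$ is legal, $\mathrm{pre}(a_{i+1}) \subseteq s_i$, and since $a_i$ touches no proposition in $\mathrm{pre}(a_{i+1})$, applying $a_i$ to $s_{i-1}$ leaves all of $\mathrm{pre}(a_{i+1})$ unchanged; hence $\mathrm{pre}(a_{i+1})$ already held in $s_{i-1}$. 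After firing $a_{i+1}$ from $s_{i-1}$, I need $\mathrm{pre}(a_i) \subseteq s_{i-1} \cup \mathrm{eff}(a_{i+1}) \setminus (\text{deletes of }a_{i+1})$; but $\mathrm{pre}(a_i)\subseteq s_{i-1}$ by legality of $P$, and $a_{i+1}$ affects no proposition in $\mathrm{pre}(a_i)$, so $a_i$ is applicable.

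I would then show that the state obtained after executing $a_1,\dots,a_{i-1},a_{i+1},a_i$ equals $s_{i+1}$. Since $\mathrm{eff}(a_i) \cap \mathrm{eff}(a_{i+1}) = \emptyset$ (both add- and delete-lists are disjoint, by the private-proposition argument), the two actions commute as state updates: applying them in either order from $s_{i-1}$ yields $s_{i-1}$ modified by the union of their add-effects with the union of their delete-effects removed, which is exactly $s_{i+1}$. Because $P$ and $P'$ coincide from position $i+1$ onward and both reach $s_{i+1}$ at that point, the remaining suffix $a_{i+2},\dots,a_k$ is legal in $P'$ and produces the same final state, so $P'(I) = P(I)$.

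The main obstacle is not technical depth but bookkeeping care: I must make sure the ``private implies disjoint'' step correctly covers both add- and delete-effects (not just add-effects), and that the argument is phrased so it works symmetrically regardless of which of $a_i, a_{i+1}$ is the private one. Once the disjointness of preconditions and effects is established, executability and state-equality both follow by a direct substitution.
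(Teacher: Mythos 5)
Your proposal is correct and follows essentially the same route as the paper: both establish that the variable sets touched by $a_i$ and $a_{i+1}$ are disjoint (via the definition of private actions/propositions and the fact that the actions belong to different agents), and then conclude that the two actions commute, so the swap preserves legality and the resulting state. Your write-up is somewhat more explicit about the WLOG case split and the precondition/effect bookkeeping, but the underlying argument is the same.
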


\begin{proof}
By definition of private and public actions, and because $a_i,a_{i+1}$ are actions belonging to different agents, $\mathrm{varset}(a_i)\cap \mathrm{varset}(a_{i+1}) = \emptyset$, where $\mathrm{varset}(a)$ is the set of variables which affect and are affected by $a$. Therefore, $a_i$ does not achieve any of $a_{i+1}$'s preconditions, and $a_{i+1}$ does not destroy any of $a_i$'s preconditions. Therefore, if $s$ is the state in which $a_i$ is executed in $P$, $a_{i+1}$ is executable in $s$, $a_i$ is executable in $a_{i+1}(s)$, and $a_i(a_{i+1}(s))=a_{i+1}(a_i(s))$. Therefore, $P'=(a_1,\dots,a_{i+1},a_i,\dots,a_{k})$ is a legal plan for $\Pi$. Since the suffix $(a_{i+2},a_{i+3},\dots,a_{k})$ remains unchanged in $P'$, $P(I)=P'(I)$, completing the proof.
\end{proof}

\begin{corollary}	\label{lem:restriction-on-P}
For a MA planning problem $\Pi$ for which an optimal plan
$P=(a_1,a_2,\dots,a_{k})$ exists, there exists an optimal plan
$P'=(a_1',a_2',\dots,a_{k}')$ for which the following restrictions apply:
\begin{enumerate}
\item If $a_i$ is the first public action in $P'$, then $a_1,\dots,a_i$ belong to the same agent.
\item For each pair of consecutive public actions $a_i,a_j$ in $P'$, all actions $a_l$, $i<l\leq j$ belong to the same agent.
\end{enumerate}
\end{corollary}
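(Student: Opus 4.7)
The plan is to transform the given optimal plan $P=(a_1,\ldots,a_k)$ into a rearrangement $P'$ satisfying both restrictions by repeatedly applying Lemma \ref{lem:flipping}. Each such swap preserves validity and leaves the multiset of actions unchanged, hence also preserves total cost, so any plan reachable from $P$ in this way is again a legal optimal plan for $\Pi$.

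First I would specify the target $P'$. The public actions of $P$ are kept in their original relative order. Each private action $a_l$ by agent $\varphi$ is assigned to the ``segment'' that will end at the earliest public action by $\varphi$ occurring at or after position $l$ in $P$, retaining the relative order among private $\varphi$-actions mapped to the same segment. Private actions by agents having no subsequent same-agent public action are appended, in their original relative order, as a tail after the last public action; the corollary imposes no constraint on that tail. Block by block, $P'$ is then the concatenation, for each public action $a_{p_k}$ of $P$ in order, of the private actions assigned to segment $k$ followed by $a_{p_k}$, with the tail appended.

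The core step is to show $P'$ is reachable from $P$ by Lemma \ref{lem:flipping} swaps. I would track an inversion count: an adjacent pair $(a_l,a_{l+1})$ in the current plan is an inversion when the targets of $a_l$ and $a_{l+1}$ in $P'$ appear in the opposite order. Because the construction preserves the relative order of actions within each single agent, no adjacent inversion can consist of two actions by the same agent; likewise, since the construction preserves the relative order of public actions, no adjacent inversion can consist of two public actions. Therefore every adjacent inversion consists of actions by distinct agents with at least one private---exactly the hypothesis of Lemma \ref{lem:flipping}---so the swap is legal and strictly decreases the inversion count. After finitely many steps the current plan equals $P'$.

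Finally, I would verify the two conditions directly from the construction. In $P'$, the prefix up to and including the first public action $a_{p_1}$, say belonging to agent $\varphi_1$, consists of the private $\varphi_1$-actions assigned to segment $1$ followed by $a_{p_1}$ itself, all by $\varphi_1$, giving condition $1$. For consecutive public actions $a_{p_k}$ and $a_{p_{k+1}}$ with $\varphi_{k+1}$ the agent of $a_{p_{k+1}}$, the intervening actions together with $a_{p_{k+1}}$ are, by construction, the private $\varphi_{k+1}$-actions assigned to segment $k+1$ followed by $a_{p_{k+1}}$, all by $\varphi_{k+1}$, giving condition $2$. The main technical point is the inversion argument---specifically, the observation that every adjacent inversion is licensed by Lemma \ref{lem:flipping}; everything else is bookkeeping.
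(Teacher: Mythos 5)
Your proposal is correct and follows essentially the same route as the paper: both proofs obtain $P'$ by repeatedly applying Lemma \ref{lem:flipping} to move each agent's private actions so that they sit immediately before that agent's next public action, noting that each swap preserves legality, the final state, and the cost. The paper states this in one sentence; your inversion-count argument (every adjacent out-of-order pair is same-agent-free and not public--public, hence swappable) is just a more explicit justification of the same rearrangement.
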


\begin{proof}
Using repeated application of Lemma \ref{lem:flipping}, we can move any ordered sequence of private actions performed by agent $\varphi$, so that it would be immediately before $\varphi$'s subsequent public action and maintain legality of the plan. Since application of Lemma \ref{lem:flipping} does not change the cost of plan, the resulting plan is cost-optimal as well.
\end{proof}

Next, we prove the following lemma, which is a MA extension to a well known result for \astar. In what follows, we have tacitly assumed a \textit{liveness} property with the conditions that every sent message eventually arrives at its destination and that all agent operations take a finite amount of time. Also, for the clarity of the proof, we assume the atomicity of the \textit{expand} and \textit{process-message} procedures.

\begin{lemma}	\label{lem:exists-g}
For any non-closed node $s$ and for any optimal path $P$ from $I$ to $s$ which follows the restrictions of Lemma \ref{lem:restriction-on-P}, there exists an agent $\varphi$ which either has an open node $s'$ \textbf{or} has an incoming message containing $s'$, such that $s'$ is on $P$ and $g_{\varphi}(s')=g^*(s')$ .
\end{lemma}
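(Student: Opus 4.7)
The plan is to prove this by induction on the sequence of algorithmic events (expansions and message processings, serialized in some global order). For the base case, before any event every agent has $I$ in its open list with $g_\varphi(I) = 0 = g^*(I)$ and $I$ lies on every path $P$, so the invariant holds with $s' = I$. For the inductive step, the only events that could destroy a given witness are expansions of the witness state itself; message processing can only insert a state into some open list or lower a stored $g$, and expansions of other states leave the witness untouched. So assume the pre-event witness is $s'$ on $P$ in $\varphi$'s open list with $g_\varphi(s') = g^*(s')$, and the event is $\varphi$ expanding $s'$. If $s' = s$ then $s$ is now closed with the optimal cost, contradicting the hypothesis, so there must be a next state $s''$ on $P$ via some action $a$, and I split on who owns $a$.

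If $a$ is $\varphi$'s own action, then during \textit{expand}($s'$) agent $\varphi$ applies $a$ to $s'$ and produces $s''$ with $g_\varphi(s'') \le g_\varphi(s')+\mathrm{cost}(a) = g^*(s'')$; combined with $g_\varphi(s'') \ge g^*(s'')$ this forces equality, and $s''$ is placed (or reopened) in $\varphi$'s open list as the new witness. If instead $a$ belongs to a different agent $\varphi'' \neq \varphi$, Corollary \ref{lem:restriction-on-P} forces the action of $P$ producing $s'$ to be public, so $s'$ lies at a block boundary of $P$. I will then show that \textit{expand}($s'$) sends $s'$ to $\varphi''$, placing $s'$ in $\varphi''$'s incoming message queue as the new witness. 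The relevance check holds because the terminating public action $a_j$ of $\varphi''$'s next $P$-block has public preconditions that hold in $s_{j-1}$, and because the actions strictly between $s'$ and $a_j$ on $P$ are $\varphi''$'s private actions which cannot touch public variables, those preconditions already hold in $s'$. For the creating-action-is-public check, if $\varphi$ received $s'$ through a message the creating action is public by the sending rule; if $\varphi$ produced $s'$ by its own action, I invoke Lemma \ref{lem:flipping} to rearrange $\varphi$'s best-known optimal path to $s'$ so that its last action is a public action of $\varphi$, using the block-boundary structure at $s'$.

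The hard part will be the last step of Case 2: verifying that the creating action actually stored by $\varphi$ for $s'$ is public, in the presence of possibly multiple equal-cost parents (one private-ending and one public-ending). Lemma \ref{lem:flipping} gives the combinatorial existence of a public-ending optimal path, but pinning down which predecessor $\varphi$ currently has recorded needs either a tie-breaking convention that prefers public parents, or a strengthened invariant that tracks public reachability for each witness state. Once that bookkeeping is handled, the induction goes through and the lemma follows.
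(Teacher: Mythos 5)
Your argument is, at its core, the same argument the paper gives: the same two-case split on whether the action leaving the witness state on $P$ belongs to the agent holding the witness, the same appeal to the restrictions of Corollary \ref{lem:restriction-on-P} to locate the next public action, and the same observation that private actions cannot alter public variables, so the public preconditions of $\varphi''$'s next public action already hold in $s'$. The only structural difference is packaging: you run an induction over a serialization of expansion and message events, whereas the paper argues statically by taking the highest-indexed node on $P$ that is closed with optimal $g$-value and examining what must have happened when it was expanded. Your framing also forces you to track one event you dismiss too quickly: processing a message that contains the witness removes that witness from the queue, and if the state is already closed at the receiver with an equal $g$-value the message is discarded without reopening, so ``only expansions of the witness destroy it'' is not literally true and the invariant needs a clause covering that case.

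The substantive problem is the step you yourself flag as unresolved. Line \ref{line:relevant} of Algorithm \ref{alg:expand} sends $s$ only if \emph{the last action leading to $s$ was public}, and ``last action'' means whatever creating action $\varphi$'s search currently records for $s'$ --- not the action preceding $s'$ on the particular path $P$ you are inducting along. If $\varphi$ first generated $s'$ through one of its own private actions at the same optimal cost, Algorithm \ref{alg:process-message} discards any later arrival of $s'$ with equal $g$, so the private parent persists and the send to $\varphi''$ never fires; your witness chain breaks exactly there. Neither of your suggested repairs (a tie-breaking rule preferring public parents, or a strengthened invariant tracking public reachability) is carried out, so as written the proof is incomplete. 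To be fair, the paper's own proof does not fare better: it asserts that $n_j$ is sent to $\varphi'$ after verifying only the precondition conjunct of the test on line \ref{line:relevant} and never checks the ``creating action is public'' conjunct at all. You have correctly isolated the one genuinely delicate point in this lemma, but identifying the hole is not the same as closing it; to finish you would need either to prove that an optimal-$g$ copy of $s'$ with a public recorded parent must exist at some agent, or to commit to a modified tie-breaking rule and prove the lemma for the modified algorithm.
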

\begin{proof}: Let $P=(I=n_0,n_1,\dots,n_k=s)$. If $I$ is in the open list of some agent $\varphi$ ($\varphi$ did not finish the algorithm's first iteration), let $s'=I$ and the lemma is trivially true since $g_{\varphi}(I)=g^*(I)=0$. Suppose $I$ is closed for all agents. Let $\Delta$ be the set of all nodes $n_i$ in $P$ that are closed by some agent $\varphi$, such that $g_\varphi(n_i)=g^*(n_i)$. $\Delta$ is not empty, since by assumption, $I\in \Delta$. Let $n_j$ be the element of $\Delta$ with the highest index, closed by agent $\varphi$. Clearly, $n_j\neq s$ since $s$ is non-closed. Let $a$ be the action causing the transition $n_j\rightarrow n_{j+1}$ in $P$. Therefore, $g^*(n_{j+1})=g_\varphi(n_j)+cost(a)$.

If $\varphi$ is the agent performing $a$, then $n_{j+1}$ is generated and moved to $\varphi$'s open list in lines \ref{line:succ}-\ref{line:move-to-open-list} of Algorithm \ref{alg:expand}, where $g_\varphi(n_{j+1})$ is assigned the value $g_\varphi(n_j)+cost(a)=g^*(n_{j+1})$ and the claim holds. 

Otherwise, $a$ is performed by agent $\varphi'\neq \varphi$. If $a$ is a public action, then all its preconditions hold in $n_j$, and therefore $n_j$ is sent to $\varphi'$ by $\varphi$ in line \ref{line:send} in Algorithm \ref{alg:expand}. If $a$ is a private action, by the definition of $P$, the next \textit{public} action $a'$ in $P$ is performed by $\varphi'$. Since private actions do not change the values of public variables, the public preconditions of $a'$ must hold in $n_j$, and therefore $n_j$ is sent to $\varphi'$ by $\varphi$ in line \ref{line:send} in Algorithm \ref{alg:expand}. Now, if the message containing $n_j$ has been processed by $\varphi'$, $n_j$ has been added to the open list of $\varphi'$ in Algorithm \ref{alg:process-message} and the claim holds since $g_{\varphi'}(n_j)=g_\varphi(n_j)=g^*(n_j)$. Otherwise, $\varphi'$ has an incoming (unprocessed) message containing $n_j$ and the claim holds since $g_\varphi(n_j)=g^*(n_j)$.
\end{proof}

\begin{corollary} \label{cor:exists-f}
Suppose $h_{\varphi}$ is admissible for every $\varphi\in \Phi$, and suppose the algorithm has not terminated. Then, for any optimal solution path $P$ which follows the restrictions of Lemma \ref{lem:restriction-on-P} from $I$ to any goal node $s^{\star}$, there exists an agent $\varphi_i$ which either has an open node $s$ \textbf{or} has an incoming message containing $s$, such that $s$ is on $P$ \textbf{and} $f_{\varphi_i}(s)\leq h^*(I)$.
\end{corollary}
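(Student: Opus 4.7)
The plan is to chain Lemma \ref{lem:exists-g} with admissibility and the defining property of an optimal path. Concretely, I would begin by invoking Lemma \ref{lem:exists-g} on the given optimal path $P$: since the algorithm has not terminated, there is some node $s$ on $P$ and some agent $\varphi_i$ that either has $s$ on its open list, or has an unprocessed incoming message containing $s$, with $g_{\varphi_i}(s)=g^*(s)$ (in the message case, the $g$-value carried by the message equals $g^*(s)$).

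Next I would exploit the fact that $s$ lies on an optimal solution path $P$ from $I$ to some goal $s^\star$. Along such a path, the optimal cost decomposes as $g^*(s)+h^*(s)=g^*(s^\star)=h^*(I)$, where the last equality uses that $h^*(I)$ is the cost of the optimal plan from the initial state. Then admissibility of $h_{\varphi_i}$ gives $h_{\varphi_i}(s)\le h^*(s)$, so
\begin{equation*}
f_{\varphi_i}(s)=g_{\varphi_i}(s)+h_{\varphi_i}(s)=g^*(s)+h_{\varphi_i}(s)\le g^*(s)+h^*(s)=h^*(I),
\end{equation*}
which is the desired bound.

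The one subtlety is that Lemma \ref{lem:exists-g} allows the node to sit in an incoming, not-yet-processed message. I would handle this case by appealing to Algorithm \ref{alg:process-message}: once the message is (or were it to be) processed, $g_{\varphi_i}(s)$ is set to the message's $g$-value, namely $g^*(s)$, and $h_{\varphi_i}(s)$ is set to $\max(h_{\varphi_i}(s),h_{\varphi_j}(s))$, where $\varphi_j$ is the sender. Since both $h_{\varphi_i}$ and $h_{\varphi_j}$ are admissible by hypothesis, this maximum is still bounded above by $h^*(s)$, so the same inequality chain goes through. This is the only real obstacle, and it is minor because admissibility is closed under taking the max of two estimators. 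With that case handled, the corollary follows immediately.
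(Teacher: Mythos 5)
Your proposal is correct and follows essentially the same route as the paper: invoke Lemma \ref{lem:exists-g} to obtain an agent $\varphi_i$ and a node $s$ on $P$ with $g_{\varphi_i}(s)=g^*(s)$, then chain admissibility with the identity $g^*(s)+h^*(s)=h^*(I)$ along an optimal path. Your extra care about the unprocessed-message case (the message's $g$-value equals $g^*(s)$, and the pathmax-style $\max$ of admissible estimates remains admissible) is a valid refinement of a point the paper passes over with ``in both cases.''
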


\begin{proof}:
By Lemma \ref{lem:exists-g}, for every restricted optimal path $P$, there exists an agent $\varphi_i$ which either has an open node $s$ \textbf{or} has an incoming message containing $s$, such that $s$ is on $P$ and $g_{\varphi_i}(s)=g^*(s)$ . By the definition of $f$, and since $h_{\varphi_i}$ is admissible, we have in both cases:
\begin{displaymath} f_{\varphi_i}(s)= g_{\varphi_i}(s)+h_{\varphi_i}(s) = g^*(s)+h_{\varphi_i}(s)\end{displaymath} \begin{displaymath} \leq g^*(s)+h^*(s)=f^*(s) \end{displaymath}
But since $P$ is an optimal path, $f^*(n)=h^*(I)$, for all $n\in P$, which completes the proof. 
\end{proof}

Another lemma must be proved regarding the solution verification process. We assume global consistency of all heuristic functions, since all admissible heuristics can be made consistent by locally using the pathmax equation \cite{Mero84}, and by using the \textit{max} operator as in line \ref{line:update-h} of Algorithm \ref{alg:process-message} on heuristic values of different agents. This is required since $f_{\mathrm{lower-bound}}$ must be
non-decreasing. 

\begin{lemma}
\label{lem:termination}
Let $\varphi$ be an agent which either has an open node $s$ \textbf{or} has an incoming message containing $s$. Then, the solution verification procedure for state $s^*$ with $f(s^*)>f_{\varphi}(s)$ will return false.
\end{lemma}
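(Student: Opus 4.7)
\emph{Proof proposal.} The plan is to unpack what the verification procedure actually computes and then read the claim off the contents of the recorded snapshot.

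First I would recall the procedure. It initiates Chandy--Lamport's snapshot algorithm, which captures a consistent global cut $C$: the local state of every agent (in particular, every open-list entry together with its stored $g$- and $h$-values) and the contents of every inter-agent channel at $C$. From this cut the initiator computes
\[
f_{\mathrm{lower\text{-}bound}} \;=\; \min\bigl\{\, f_{\psi}(s') \,:\, s' \text{ recorded in some open list or in-flight message}\,\bigr\},
\]
and returns \textbf{true} precisely when $f_{\mathrm{lower\text{-}bound}} \geq f(s^*)$. The legitimacy of using Chandy--Lamport here rests on the stability of the property $f_{\mathrm{lower\text{-}bound}} \geq c$, which in turn follows from global $h$-consistency; that consistency is enforced by the pathmax update on line \ref{line:update-h} of Algorithm \ref{alg:process-message}.

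The main step is then a direct observation in two cases matching the two disjuncts of the hypothesis. If at the cut $C$ the node $s$ still sits in $\varphi$'s open list, the snapshot records $s$ with its stored value $f_\varphi(s)$, hence
\[
f_{\mathrm{lower\text{-}bound}} \;\leq\; f_\varphi(s) \;<\; f(s^*).
\]
If instead $s$ is in an incoming channel to $\varphi$ at $C$, Chandy--Lamport's channel-recording rules capture that message as part of the recorded channel state, and the same inequality follows because the message carries $f_\varphi(s)$. In either case the stable property fails at $C$, the snapshot reports it false, and the procedure therefore returns \textbf{false}.

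The subtle point, and the main obstacle I expect, is bridging ``$\varphi$ has $s$'' in the hypothesis with ``$s$ is witnessed at the cut $C$'' in the argument above, because Chandy--Lamport does not freeze the computation between initiation and the recorded cut. I would handle this either by reading the hypothesis as referring to the cut itself (so the two cases above apply verbatim), or, more carefully, by showing that a state or message with $f$-value at most $f_\varphi(s)$ persists somewhere in $\varphi$'s recorded state or in one of its recorded incoming channels. Making that persistence rigorous requires walking through the interaction of \textsc{process-message}, \textsc{expand} (with its reopenings and outgoing forwards on line \ref{line:send}), and Chandy--Lamport's marker propagation, and is the main piece of technical bookkeeping; once it is established, the two-case observation above closes out the lemma.
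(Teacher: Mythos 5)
Your proposal is correct and follows essentially the same route as the paper: the paper's proof likewise observes that $f_{\mathrm{lower-bound}}\leq f_\varphi(s)<f(s^*)$ contradicts the stable property $p=$ ``$f(s^*)\leq f_{\mathrm{lower-bound}}$'' and then appeals to the correctness of the snapshot algorithm, without spelling out the channel-recording details you unpack. The timing subtlety you flag is resolved in the paper exactly by the stability of $p$ (established via pathmax/global consistency) together with the snapshot guarantee, which is in effect your first proposed resolution, so no further persistence bookkeeping is needed.
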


\begin{proof}
Let $\varphi$ be an agent which either has an open node $s$ \textbf{or} has an incoming message containing $s$, such that $f_\varphi(s)<f(s^*)$ for some solution node $s^*$. The solution verification procedure for state $s^*$ verifies the stable property $p=``f(s^*)\leq f_{\mathrm{lower-bound}}$''. Since $f_{\mathrm{lower-bound}}$ represents the lowest $f$-value of any open or unprocessed state in the system, we have $f_{\mathrm{lower-bound}}\leq f_\varphi(s)<f(s^*)$, contradicting $p$. Relying on the correctness of the snapshot algorithm, this means that the solution verification procedure will return false, proving the claim.
%
\end{proof}

We can now prove the optimality of our algorithm.

\begin{theorem}
\masas\ terminates by finding a cost-optimal path to a goal node, if one exists.
\end{theorem}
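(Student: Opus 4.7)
The plan is to prove the theorem in three stages: termination, soundness, and optimality, leveraging the lemmas already established.

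First, I would argue termination. The joint state space is finite, and each agent's local copy of any state can be re-opened only when its $g$-value strictly decreases. Since $g$ is a non-negative integer (or non-negative and bounded below given a finite action cost set), each state is expanded by a given agent only finitely many times. Combined with the liveness assumption that every message arrives in finite time and every local operation takes finite time, all per-agent expansions eventually cease. Once the system becomes globally quiescent, the snapshot-based verification for the stable property ``there are no open states in the system'' succeeds (this is the case mentioned just after the termination-detection subsection), so the algorithm terminates even when no solution exists. If a solution does exist, then by Corollary~\ref{cor:exists-f} at every moment prior to termination some agent holds, in its open list or incoming queue, a node $s$ on a restricted optimal path with $f_\varphi(s)\le h^*(I)$; thus a goal state on that path is eventually generated and broadcast, at which point verification can succeed, so the algorithm does not loop forever while a solution remains undiscovered.

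Second, I would establish optimality by contradiction. Suppose \masas{} terminates by outputting a goal state $s^\star$ with $f(s^\star) > h^*(I)$. Fix a cost-optimal plan and, using Corollary~\ref{lem:restriction-on-P}, restrict it so that consecutive public actions are separated only by private actions of the same agent. Apply Corollary~\ref{cor:exists-f} at the instant the successful verification snapshot was taken: there exists an agent $\varphi_i$ whose open list or incoming message queue contained some node $s$ on this restricted optimal path with
\[
f_{\varphi_i}(s) \;\le\; h^*(I) \;<\; f(s^\star).
\]
By Lemma~\ref{lem:termination}, the verification procedure for $s^\star$ must then return \textbf{false}, contradicting our assumption that the algorithm terminated by reporting $s^\star$ as a solution. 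Hence $f(s^\star) \le h^*(I)$. Since $s^\star$ is a genuine goal state reached by a legal plan (the trace-back phase reconstructs an executable sequence of actions from $I$ to $s^\star$), $f(s^\star)$ is the cost of some feasible plan, so $f(s^\star)\ge h^*(I)$. Combining the two inequalities yields $f(s^\star) = h^*(I)$.

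The main obstacle I anticipate is the interplay between the asynchronous snapshot and the guarantee supplied by Corollary~\ref{cor:exists-f}. Corollary~\ref{cor:exists-f} is a statement about the ``true'' global state at some instant, whereas Chandy--Lamport returns a \emph{consistent cut} that need not correspond to any real global state. The fix is to invoke stability: with globally consistent heuristics (ensured by pathmax and the $\max$-rule on line~\ref{line:update-h}), the predicate $f_{\mathrm{lower-bound}} \ge c$ is monotone, so once Corollary~\ref{cor:exists-f} guarantees the existence of a witness $s$ with $f_{\varphi_i}(s) \le h^*(I)$ at any real time prior to termination, the system's lower bound can never rise above $h^*(I)$ until that witness is expanded. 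Consequently any snapshot taken before such an expansion correctly reports $f_{\mathrm{lower-bound}}\le h^*(I)<f(s^\star)$, which is exactly the ingredient Lemma~\ref{lem:termination} needs. Once this subtlety is handled, the remaining argument is routine.
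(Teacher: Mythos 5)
Your overall strategy mirrors the paper's: the optimality half (assume termination at a goal node $s^\star$ with $f(s^\star)>h^*(I)$, invoke Corollary~\ref{cor:exists-f} to exhibit a witness $s$ with $f_{\varphi_i}(s)\le h^*(I)$, and apply Lemma~\ref{lem:termination} to contradict successful verification) is exactly the paper's Case~3 and is sound, and your explicit discussion of why a Chandy--Lamport consistent cut suffices -- stability of the predicate $f_{\mathrm{lower-bound}}\ge c$ under globally consistent heuristics -- is a worthwhile addition that the paper leaves implicit.

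The gap is in the termination half. You argue that a goal state on a restricted optimal path is ``eventually generated and broadcast, at which point verification \emph{can} succeed.'' But the theorem requires that verification eventually \emph{does} succeed, and that is the delicate part. When the optimal goal state $s^*$ is first expanded, some agent may still hold an open node or an unprocessed message with $f$-value below $f(s^*)$, so that verification returns false; nothing in your argument rules out the algorithm then running forever without ever successfully verifying $s^*$. The paper closes this hole with two ingredients you omit: (i) because the reachable space is finite and each \emph{non-goal} node can be reopened only finitely often, there is a time $t$ after which every non-goal node with $f_\varphi(s)<h^*(I)$ is closed forever by every agent -- note this bound applies only to non-goal nodes, since goal expansions are unbounded, which is why your blanket claim that ``all per-agent expansions eventually cease'' is not quite right; and (ii) every time $s^*$ is expanded it is immediately re-broadcast, so it persists in some agent's open list or message queue; in particular the last agent to close a non-goal node with $f<f(s^*)$ still has $s^*$ available, will eventually expand it after time $t$, and at that point $f_{\mathrm{lower-bound}}\ge f(s^*)$ so the verification returns true. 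Without the re-broadcast observation and the ``closed forever after time $t$'' argument, the claim that \masas\ terminates when a solution exists does not go through.
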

\begin{proof}: We prove this theorem by assuming the contrary - the algorithm does not terminate by finding a cost-optimal path to a goal node. 3 cases are to be considered:
\begin{enumerate}
\item
\textit{The algorithm terminates at a non-goal node.} This contradicts the termination condition, since solution verification is initiated only when a goal state is expanded.
\item
\textit{The algorithm does not terminate.} Since we are dealing with a finite search space, let $\chi(\Pi)$ denote the number of possible \textit{non-goal} states. Since there are only a finite number of paths from $I$ to any node $s$ in the search space, $s$ can be \textit{reopened} a finite number of times. Let $\rho(\Pi)$ be the maximum number of times any \textit{non-goal} node $s$ can be reopened by any agent.
Let $t$ be the time point when all non-goal nodes $s$ with $f_{\varphi}(s)<h^*(I)$ have been closed forever by all agents $\varphi$. This $t$ exists since \textit{a}) we assume liveness of message passing and agent computations; \textit{b}) after at most $\chi(\Pi) \times \rho(\Pi)$ expansions of non-goal nodes by $\varphi$, all non-goal nodes of the search space must be closed forever by $\varphi$; and \textit{c}) no goal node $s^*$ with $f(s^*)<h^*(I)$ exists\footnote{This is needed since \textit{goal} node expansions are not bounded.}.

By Corollary \ref{cor:exists-f} and since an optimal path from $I$ to some goal state $s^*$ exists, some agent $\varphi$ expanded state $s^*$ at time $t'$,
such that $f_\varphi(s^*)\leq h^*(I)$. Since $s^*$ is an optimal solution, if $t'\geq t$, $f_{\mathrm{lower-bound}}\geq f_\varphi(s^*)$ at time $t'$. Therefore, $\varphi$'s verification procedure of $s^*$ will return true, and the algorithm terminates. 

Otherwise, $t'<t$. Let $\varphi'$ be the last agent to close a non-goal state $s$ with $f_{\varphi'}(s)<f_\varphi(s^*)$. $\varphi'$ has $s^*$ in its open list or as an incoming message. This is true because $s^*$ has been broad-casted to all agents by $\varphi$, and because  every time $s^*$ is closed by some agent (when it expands it), it is immediately broad-casted again, ending up in the agent's open list or in its message queue. Now, $\varphi'$ has no more open nodes with $f$-value lower than $s^*$, so it will eventually expand $s^*$, initiating the solution verification procedure which will return true, since $f_{\mathrm{lower-bound}}\geq f_\varphi(s^*)$. This contradicts the assumption of non-termination.

\item
\textit{The algorithm terminates at a goal node without achieving optimal cost.}
Suppose the algorithm terminates at some goal node $s$ with $f(s)>h^*(I)$. By Corollary \ref{cor:exists-f}, there existed just before termination an agent $\varphi$ having an open node $s'$, or having an incoming message containing $s'$, such that $s'$ is on an optimal path and $f_\varphi(s')\leq h^*(I)$. Therefore, by Lemma \ref{lem:termination}, the solution verification procedure for state $s$ will return false, contradicting the assumption that the algorithm terminated.
\end{enumerate}
This concludes the proof.
\end{proof}

\section{MA Planning Framework}
\label{framework}

One of the main goals of this work is to provide a general and scalable framework for solving the MA planning problem. We believe that such a framework will provide researchers with fertile ground for developing new search techniques and heuristics for MA planning, and extensions to richer planning formalisms.

We chose Fast Downward \cite{Helmert06} (FD) as the basis for our MA framework -- \textbf{MA-FD}.
FD is currently the leading framework for planning, both in the number of algorithms and heuristics that it provides, and in terms of performance -- winners of the past three international planning competitions were implemented on top of it. FD is also well documented and supported, so implementing and testing new ideas is relatively easy.

\mafd\ uses FD's translator and preprocessor, with minor changes to support the distribution of operators to agents. In addition to the PDDL files describing the domain and the problem instance, \mafd\ receives a file detailing the number of agents, their names, and their IP addresses. The agents do not have shared memory, and all information is relayed between agents using messages. Inter-agent communication is performed using the TCP\slash IP protocol, which enables running multiple \mafd\ agents as processes on multi-core systems, networked computers\slash robots, or even the cloud. \mafd\ is therefore fit to run as is on \textit{any} number of (networked) processors, in both its optimal and satisficing setting.

Both settings are currently implemented and available\footnote{The code is available at \url{https://github.com/raznis/dist-selfish-fd} .}, and since there is full flexibility regarding the heuristics used by agents, all heuristics available on FD are also available on \mafd. New heuristics are easily implementable, as in FD, and creating new search algorithms can also be done with minimal effort, since \mafd\ provides the ground-work (parsing, communication, etc.). 

\section{Empirical Results}
\label{empirical}

To evaluate \mafs\ in its non-optimal setting, we compare it to the state-of-the-art distributed planner \textsc{map-pop} \cite{TorrenoOS12}, and to the Planning-First algorithm~\cite{NissimBD10}. As noted in the Introduction, another available algorithm for distributed MA planning is via reduction to distributed CSPs using an off-the-shelf dis-CSP solver. We found this approach was incapable of solving even small planning problems, and therefore we omitted its results from the tables.
The problems used are benchmarks from the International Planning Competition (IPC)\nocite{ipc} where tasks can naturally be cast as MA problems. The \textit{Satellites} and \textit{Rovers} domains where motivated by real MA applications used by NASA. Satellites requires planning and scheduling observation tasks between multiple satellites, each equipped with different imaging tools. Rovers involves multiple rovers navigating a planetary surface, finding samples and communicating them back to a Lander. \textit{Logistics}, \textit{Transport} and \textit{Zenotravel} are transportation domains, where multiple vehicles transport packages to their destination. The Transport domain generalizes Logistics, adding a capacity to each vehicle (i.e., a limit on the number of packages it may carry) and different move action costs depending on road length. We consider problems from the Rovers and Satellites domains as loosely-coupled, i.e., problems where agents have many private actions (e.g., instrument warm-up and placement in Rovers, which does not affect other agents). On the other hand, we consider the transportation domains as tightly-coupled, having few private actions (only move actions in Logistics) and many public actions (all load/unload actions).

For each planning problem, we ran \mafs, using eager best-first search and an alternation open list with one queue for each of the two heuristic functions \emph{ff} \cite{Hoffmann200114253} and the \emph{context-enhanced additive heuristic} \cite{HelmertG08}. Table \ref{exp:sat} depicts results of \mafs, \textsc{map-pop} and Planning-First, on all IPC domains supported by \textsc{map-pop}. We compare the algorithms across three categories -- 1) solution qualit,y which reports the cost of the outputted plan, 2) running time, and 3) the number of messages sent during the planning process. Experiments were run on a AMD Phenom 9550 2.2GHZ processor, time limit was set at 60 minutes, and memory usage was limited to 4GB. An ``X'' signifies that the problem was not solved within the time-limit, or exceeded memory constraints.

\begin{table}[t]\footnotesize
\centering \setlength{\tabcolsep}{5.7pt}
\caption{Comparison of MA greedy best-first search, \textsc{map-pop} and Planning-First. Solution cost, running time (in sec.) and the number of sent messages are shown. \textit{``X''} denotes problems which weren't solved after one hour, or in which the 4GB memory limit was exceeded.}
\label{exp:sat}
\begin{tabular}{|l|c||r|r|r||r|r|r||r|r|r|} \hline
 &\#  & \multicolumn{3}{|c||}{Solution cost} & \multicolumn{3}{|c||}{Runtime} & \multicolumn{3}{|c|}{Messages}\\
problem	&	agents	&	\mafs	&	\textsc{map-pop}	&	\textsc{p-f}	&	\mafs	&	\textsc{map-pop}	&	\textsc{p-f}	&	\mafs	&	\textsc{map-pop}	&	\textsc{p-f}	\\	\hline
Logistics4-0	&	3	&	20	&	20	&	X	&	\textbf{0.05}	&	20.9	&	X	&	\textbf{340}	&	375	&	X	\\	
Logistics5-0	&	3	&	27	&	27	&	X	&	\textbf{0.1}	&	90.4	&	X	&	\textbf{450}	&	1565	&	X	\\	
Logistics6-0	&	3	&	25	&	25	&	X	&	\textbf{0.06}	&	60.6	&	X	&	\textbf{470}	&	1050	&	X	\\	
Logistics7-0	&	4	&	36	&	37	&	X	&	\textbf{0.2}	&	233.3	&	X	&	\textbf{2911}	&	4898	&	X	\\	
Logistics8-0	&	4	&	31	&	31	&	X	&	\textbf{0.16}	&	261	&	X	&	\textbf{940}	&	4412	&	X	\\	
Logistics9-0	&	4	&	36	&	36	&	X	&	\textbf{1.02}	&	193.3	&	X	&	\textbf{2970}	&	3168	&	X	\\	
Logistics10-0	&	5	&	\textbf{45}	&	51	&	X	&	\textbf{0.43}	&	471	&	X	&	\textbf{2097}	&	14738	&	X	\\	
Logistics11-0	&	5	&	\textbf{54}	&	X	&	X	&	\textbf{2.7}	&	X	&	X	&	\textbf{14933}	&	X	&	X	\\	
Logistics12-0	&	5	&	\textbf{44}	&	45	&	X	&	\textbf{1.3}	&	1687	&	X	&	\textbf{4230}	&	28932	&	X	\\	
Logistics13-0	&	7	&	\textbf{87}	&	X	&	X	&	\textbf{0.9}	&	X	&	X	&	\textbf{5140}	&	X	&	X	\\	
Logistics14-0	&	7	&	\textbf{68}	&	X	&	X	&	\textbf{0.67}	&	X	&	X	&	\textbf{2971}	&	X	&	X	\\	
Logistics15-0	&	7	&	\textbf{95}	&	X	&	X	&	\textbf{0.74}	&	X	&	X	&	\textbf{6194}	&	X	&	X	\\	\hline
Rovers5	&	2	&	\textbf{22}	&	24	&	24	&	\textbf{0.13}	&	18.7	&	22.4	&	\textbf{84}	&	323	&	590	\\	
Rovers6	&	2	&	\textbf{37}	&	39	&	X	&	\textbf{0.07}	&	18.2	&	X	&	\textbf{27}	&	313	&	X	\\	
Rovers7	&	3	&	18	&	18	&	X	&	\textbf{0.07}	&	44.1	&	X	&	\textbf{225}	&	490	&	X	\\	
Rovers8	&	4	&	\textbf{26}	&	27	&	X	&	\textbf{0.2}	&	744	&	X	&	\textbf{937}	&	12102	&	X	\\	
Rovers9	&	4	&	38	&	\textbf{36}	&	X	&	\textbf{0.82}	&	222	&	X	&	\textbf{380}	&	4467	&	X	\\	
Rovers10	&	4	&	\textbf{38}	&	X	&	X	&	\textbf{0.41}	&	X	&	X	&	\textbf{271}	&	X	&	X	\\	
Rovers11	&	4	&	37	&	\textbf{34}	&	X	&	\textbf{0.34}	&	132.5	&	X	&	\textbf{299}	&	2286	&	X	\\	
Rovers12	&	4	&	21	&	\textbf{20}	&	X	&	\textbf{0.09}	&	34.4	&	X	&	435	&	\textbf{410}	&	X	\\	
Rovers13	&	4	&	\textbf{49}	&	X	&	X	&	\textbf{0.15}	&	X	&	X	&	\textbf{472}	&	X	&	X	\\	
Rovers14	&	4	&	\textbf{31}	&	35	&	X	&	\textbf{0.42}	&	443.8	&	X	&	\textbf{310}	&	7295	&	X	\\	
Rovers15	&	4	&	46	&	\textbf{44}	&	X	&	\textbf{0.33}	&	164	&	X	&	\textbf{252}	&	2625	&	X	\\	
Rovers17	&	6	&	\textbf{52}	&	X	&	X	&	\textbf{0.57}	&	X	&	X	&	\textbf{628}	&	X	&	X	\\	\hline
Satellites3	&	2	&	11	&	11	&	11	&	\textbf{0.01}	&	4.5	&	6.8	&	\textbf{7}	&	78	&	104	\\	
Satellites4	&	2	&	\textbf{17}	&	20	&	20	&	\textbf{0.17}	&	6.4	&	35.2	&	\textbf{36}	&	109	&	144	\\	
Satellites5	&	3	&	16	&	\textbf{15}	&	X	&	\textbf{0.15}	&	15.4	&	X	&	\textbf{78}	&	250	&	X	\\	
Satellites6	&	3	&	20	&	20	&	X	&	\textbf{0.02}	&	12.2	&	X	&	\textbf{30}	&	323	&	X	\\	
Satellites7	&	4	&	22	&	22	&	X	&	\textbf{0.23}	&	28.8	&	X	&	\textbf{248}	&	543	&	X	\\	
Satellites8	&	4	&	26	&	26	&	X	&	\textbf{0.21}	&	40.7	&	X	&	\textbf{133}	&	678	&	X	\\	
Satellites9	&	5	&	30	&	\textbf{29}	&	X	&	\textbf{0.35}	&	93.3	&	X	&	\textbf{397}	&	1431	&	X	\\	
Satellites10	&	5	&	30	&	\textbf{29}	&	X	&	\textbf{0.41}	&	65.9	&	X	&	\textbf{355}	&	942	&	X	\\	
Satellites11	&	5	&	31	&	31	&	X	&	\textbf{0.69}	&	51	&	X	&	\textbf{514}	&	904	&	X	\\	
Satellites12	&	5	&	\textbf{43}	&	49	&	X	&	\textbf{1.1}	&	76.9	&	X	&	\textbf{390}	&	1240	&	X	\\	
Satellites13	&	5	&	61	&	X	&	X	&	\textbf{0.88}	&	X	&	X	&	\textbf{639}	&	X	&	X	\\	
Satellites14	&	6	&	44	&	\textbf{43}	&	X	&	\textbf{1.8}	&	123.4	&	X	&	\textbf{721}	&	1781	&	X	\\	
Satellites15	&	8	&	\textbf{63}	&	X	&	X	&	\textbf{3.9}	&	X	&	X	&	\textbf{1507}	&	X	&	X	\\	
Satellites16	&	10	&	56	&	56	&	X	&	\textbf{6.6}	&	481.2	&	X	&	\textbf{2279}	&	4942	&	X	\\	
Satellites17	&	12	&	49	&	49	&	X	&	\textbf{6.7}	&	2681	&	X	&	\textbf{2172}	&	26288	&	X	\\	\hline
\end{tabular}
\end{table}

\begin{table}[t]\footnotesize
\centering 
\caption{Comparison of centralized \astar\ and \masas\ running on multiple processors. Running time (in sec.), average initial state $h$-values and \masas's efficiency values w.r.t.\ \astar\ are shown.} 
\label{exp:distributed}
\begin{tabular}{|l|c||r|r|r||r|r||r|r|} \hline
 & &  \multicolumn{3}{|c||}{Time} & \multicolumn{2}{|c||}{Expansions} & \multicolumn{2}{|c|}{init-h}  \\ 
problem	&	agents	&	\astar	&	\masas	&	Efficiency	&	\astar	&	\masas	&	\astar	&	\masas	\\	\hline
Logistics4-0	&	3	&	0.07	&	0.03	&	0.78	&	21	&	2496	&	20	&	17	\\	
Logistics5-0	&	3	&	0.16	&	0.13	&	0.41	&	28	&	11020	&	27	&	23	\\	
Logistics6-0	&	3	&	0.29	&	0.15	&	0.64	&	547	&	9722	&	24	&	22	\\	
Logistics7-0	&	4	&	1.42	&	8.26	&	0.04	&	29216	&	520122	&	33	&	29	\\	
Logistics8-0	&	4	&	1.28	&	2.89	&	0.11	&	16771	&	157184	&	27	&	24	\\	
Logistics9-0	&	4	&	2.17	&	11.6	&	0.05	&	43283	&	687572	&	33	&	29	\\	
Logistics10-0	&	5	&	132	&	X	&	0.00	&	4560551	&	X	&	37	&	34	\\	
Logistics11-0	&	5	&	180	&	X	&	0.00	&	5713287	&	X	&	41	&	38	\\	\hline
Rovers3	&	2	&	0.2	&	0.11	&	0.91	&	12	&	86	&	11	&	9	\\	
Rovers4	&	2	&	0.07	&	0.04	&	0.88	&	9	&	121	&	8	&	7	\\	
Rovers5	&	2	&	8.24	&	4.4	&	0.94	&	213079	&	307995	&	12	&	11	\\	
Rovers6	&	2	&	X	&	301	&	\textbf{$\infty$}	&	X	&	\textbf{18274357}	&	24	&	21	\\	
Rovers7	&	3	&	32.4	&	6.82	&	\textbf{1.58}	&	1172964	&	\textbf{676750}	&	9	&	7	\\	
Rovers12	&	4	&	190.8	&	27.6	&	\textbf{1.73}	&	4963979	&	\textbf{2591428}	&	11	&	7	\\	\hline
Satellites3	&	2	&	0.3	&	0.29	&	0.52	&	12	&	1498	&	11	&	5	\\	
Satellites4	&	2	&	0.6	&	0.4	&	0.75	&	18	&	5045	&	17	&	11	\\	
Satellites5	&	3	&	16.83	&	3.9	&	\textbf{1.44}	&	236647	&	\textbf{42557}	&	10	&	7	\\	
Satellites6	&	3	&	1.93	&	0.92	&	0.70	&	1382	&	81731	&	17	&	12	\\	
Satellites7	&	4	&	X	&	18.26	&	\textbf{$\infty$}	&	X	&	\textbf{1303910}	&	10	&	9	\\	\hline
Transport1	&	2	&	0.02	&	0.08	&	0.13	&	6	&	285	&	54	&	4	\\	
Transport2	&	2	&	0.17	&	0.21	&	0.40	&	242	&	1628	&	79	&	4	\\	
Transport3	&	2	&	1.35	&	20.64	&	0.03	&	69500	&	546569	&	34	&	4	\\	
Transport4	&	2	&	54.6	&	335.15	&	0.08	&	3527592	&	4397124	&	10	&	10	\\	
Transport5	&	2	&	X	&	X	&	N/A	&	X	&	X	&	12	&	10	\\	
Transport6	&	2	&	X	&	X	&	N/A	&	X	&	X	&	6	&	5	\\	\hline
Zenotravel3	&	2	&	0.33	&	0.42	&	0.39	&	7	&	516	&	6	&	5	\\	
Zenotravel4	&	2	&	0.32	&	0.43	&	0.37	&	9	&	762	&	7	&	6	\\	
Zenotravel5	&	2	&	0.3	&	0.42	&	0.36	&	14	&	577	&	10	&	8	\\	
Zenotravel6	&	2	&	0.47	&	0.61	&	0.39	&	270	&	1280	&	10	&	8	\\	
Zenotravel7	&	2	&	0.55	&	0.8	&	0.34	&	621	&	5681	&	11	&	9	\\	
Zenotravel8	&	3	&	1.22	&	1.71	&	0.24	&	136	&	5461	&	9	&	7	\\	
Zenotravel9	&	3	&	31.7	&	315	&	0.03	&	775340	&	6745088	&	16	&	14	\\	
Zenotravel10	&	3	&	9.3	&	338	&	0.01	&	116872	&	4416461	&	20	&	17	\\	
Zenotravel11	&	3	&	2.99	&	11.7	&	0.09	&	20157	&	200868	&	11	&	9	\\	
Zenotravel12	&	3	&	X	&	X	&	N/A	&	X	&	X	&	16	&	14	\\	\hline
\end{tabular}
\end{table}

It is clear that \mafs\ overwhelmingly dominates both \textsc{map-pop} and Planning-First (denoted \textsc{p-f}), with respect to running time and communication, solving \emph{all} problems faster while sending less messages. All problems were solved at least 70X faster than \textsc{map-pop}, with several Logistics and Rovers problems being solved over 1000X faster, and the largest Satellites instance being solved over 400X faster. The low communication complexity of \mafs\ is important, since in distributed systems message passing could be more costly and time-consuming than local computation. Moreover, as messages in \mafs\ are essentially a state description, message size is linear in the number of propositions. Although for some problems \textsc{map-pop} finds lower-cost solutions, in most cases \mafs\ outputs better solution quality. We believe that when \mafs\ finds lower quality solutions, this is mostly because message-passing takes longer than local computation -- a subset of agents that have the ability to achieve the goal ``on their own'', will do so before being made aware of other, less costly solutions including other agents. One possible way of improving solution quality further would be using anytime search methods, which improve solution quality over time. 

To evaluate \masas\ with respect to centralized optimal search (\astar), we ran both algorithms using the state-of-the-art Merge\&Shrink heuristic\footnote{We used exact bisimulation with abstraction size limit $10K$ (DFP-bop) as the shrink strategy of Merge\&Shrink \cite{NissimHH11}.} \cite{HelmertHH07}. Both configurations were run on the same multi-core machine, where for \masas, each agent was allocated a single processor, and \astar\ was run on a single processor. Time limit was set at 30 minutes, and memory usage was limited to 4GB, regardless of the number of cores used. Table \ref{exp:distributed} depicts the runtime, efficiency (speedup divided by the number of processors), number of expanded nodes and the average of the agents' initial state $h$-values. When comparing \masas\ to centralized \astar, our intuition is that efficiency will be low, due to the inaccuracy of the agents' heuristic estimates, and the overhead incurred by communication. In fact, the local estimates of the agents are much less accurate than those of the global heuristic, as is apparent from the lower average $h$ values of the initial state, given as approximate measures of heuristic quality.
In the tightly-coupled domains -- Logistics, Transport and Zenotravel, we do notice very low efficiency values, mostly due to the large number of public actions, which result in many messages being passed between agents. However, in the more loosely-coupled domains Satellites and Rovers, \masas\ exhibits nearly linear and super-linear speedup, solving 2 problems not solved by centralized \astar. We elaborate on this important issue in the next section.

\section{Partition-Based Path Pruning}
\label{sec:partition}
The empirical results presented in Table \ref{exp:distributed} raise an interesting question: \emph{how does \masas\ achieve $>1$ efficiency in weakly coupled environments?} It is known that when using a consistent heuristic, \astar\ is optimal in the number of nodes it expands to recognize an optimal solution. In principle, it appears that \masas\ should expand at least the same search tree, so it is not clear, a-priori, why we reach super-linear speedup when comparing to \astar. The main reason for this is \masas's inherent exploitation of symmetry, resulting in the pruning of effect-equivalent paths. 

Symmetry exploitation utilizes the notion of public and private actions. As we noted in Corollary \ref{lem:restriction-on-P}, the existence of private actions implies the existence of multiple effect-equivalent permutations of certain action sequences. \astar\ does not recognize or exploit this fact, and \mafs\ does. Specifically, imagine that agent $\varphi_i$ just generated state $s$ using one of its public actions, and $s$ satisfies the preconditions of some public action $a$ of agent $\varphi_j$. Agent $\varphi_i$ will eventually send $s$ to agent $\varphi_j$, and the latter will eventually apply $a$ to it. Now, imagine that agent $\varphi_i$ has a private action $a'$ applicable at state $s$, resulting in the state $s'=a'(s)$. Because $a'$ is private to $\varphi_i$, from the fact that $a$ is applicable at $s$ we deduce that $a$ is applicable at $s'$ as well. Hence, \astar\ would apply $a$ at $s'$. However, in \mafs, agent $\varphi_j$ would not apply $a$ at $s'$ because it will not receive $s'$ from agent $\varphi_i$. Thus, \mafs\ does not explore all possible action sequences. This fact can also be clearly seen in the example given in Figure \ref{fig:example} -- The reachable search space in this example has 31 states, while the number of reachable states using \mafs\ is only 16. 

Since \mafs's inherent pruning of action sequences requires only a partitioning of the actions, it does not pertain only to MA systems, but to any factored system having internal operators. Since the only difference between a \mastrips\ planning problem and a \strips\ one is the fact that actions are partitioned between agents, why not re-factor the centralized problem into an ``artificial'' MA one? By mapping all actions into disjoint sets such that $\bigcup_i^k A_i = A$, each representing an ``agent'', we can now distinguish between private and public operators. Given this distinction, the pruning rule used is simple:
\begin{quote}
{\bf Partition-Based (PB) Pruning Rule:} \emph{Following a private action $a\in A_i$, prune all actions not in $A_i$}.
\end{quote}

The fact that this pruning rule is optimality-preserving (i.e., does not prune \emph{all} optimal solutions) follows immediately from Corollary \ref{lem:restriction-on-P}, as if there exists an optimal solution $\pi^\star$, it can be permuted into a legal, optimal plan which is not pruned. This, however, is not enough to maintain the optimality of \astar\ search. We now present a slight modification of the \astar~algorithm, which allows the application of optimality preserving pruning methods (such as PB-pruning) for the purpose of optimal planning.

\subsection{Path Pruning A*}
\label{sec:theory}
The path pruning \astar, (denoted \ppastar), is a search algorithm which receives a planning problem $\Pi$ and a pruning method $\prune$ as input, and produces a plan $\pi$, which is guaranteed to be optimal provided that $\prune$ respects the following properties: (i) $\prune$ is optimality preserving, and (ii) $\prune$ prunes only according to the last action. It is easy to see, for example, that PB pruning respects the second condition, since it fires only according to the last action. 

\subsubsection{\ppastar~versus \astar}
\ppastar~is identical to \astar~except for the following three changes. First, a different data-type is used for recording an open node. In \ppastar, an open list node is a pair $(A,s)$, where $s$ is the state and $A$ is a set of actions, recording various possible ways to reach $s$ from a previous state. Second, node expansion is subject to the pruning rules of method $\prune$. Namely, \ppastar~executes an applicable action $a'$ in state $(A,s)$ only if there is at least one action $a \in A$ s.t.~the execution of $a'$ is allowed after $a$ under $\prune$'s pruning rules. Third, duplicate states are handled differently. In \astar, when a state $s$ which is already open is reached by another search path, the open list node is updated with the action of the lower $g$ value, and in case of a tie -- drops the competing path. In contrast, ties in \ppastar~are handled by preserving the last actions which led to $s$ in each of the paths. Hence, if action $a$ led to an open state $s$ via a path of cost $g$, and if the existing open list node $(A,s)$  has the same $g$ value, then the node is updated to $(A\cup \{a\},s)$, thus all actions leading to $s$ with path cost $g$ are saved. Tie breaking also affects the criterion under which closed nodes are reopened. In \astar, nodes are reopened only when reached via paths of lower $g$ values. In \ppastar, if an action $a$ leading to state $s$ of some closed node $(A,s)$ is not contained in $A$, and if the $g$ values are equal, then the node reopens as $(\{A \cup\{a\}\},s)$. However, when the node is expanded, only actions that are now allowed by $\prune$ and were previously pruned, are executed. 
We now move to prove the correctness of \ppastar.

\subsubsection{Proof of Correctness and Optimality}
The next lemma refers to \ppastar, and assumes $\prune$ to be an optimality preserving pruning method, which prunes according to the last action. We say that node $(A,s)$ is \emph{optimal on path $P$}, if $A$ contains an action $a$ which leads to state $s$ on path $P$, and $g(s)=g^*(s)$. The notation $s \prec_P s'$ denotes the fact that state $s$ precedes state $s'$ in optimal path $P$.
\begin{lemma}	
\label{lem:ppastar-exists-g}
In \ppastar, for any non-closed state $s_k$ and for any optimal non-$\prune$-pruned path $P$ from $I$ to $s_k$, there exists an open list node $(A',s')$ which is optimal on $P$.
\end{lemma}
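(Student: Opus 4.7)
The proof will mirror that of Lemma \ref{lem:exists-g} for \masas, with the new ingredient being the bookkeeping of the action-set component $A$ of open and closed nodes. The high-level plan is to locate the last state along $P$ that is currently closed and ``optimal-on-$P$'', show that its $P$-successor has already been generated with both the optimal $g$-value and the relevant action of $P$ in its action-set, and then argue that this successor cannot itself be closed without contradicting the choice of the last node.

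Concretely, if $I$ is not yet closed, I take $I$ itself as the required open node, since trivially $g(I)=0=g^*(I)$. Otherwise, let $\Delta$ be the set of states $n_i$ on $P$ whose current closed-list entry $(A,n_i)$ satisfies $g(n_i)=g^*(n_i)$ and has $A$ containing an action of $P$ leading to $n_i$ (the action condition being vacuous for $i=0$). Then $I\in\Delta$, and since $s_k$ is non-closed, the element $n_j\in\Delta$ of maximum index satisfies $j<k$. Let $a$ be the action of $P$ from $n_j$ to $n_{j+1}$.

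The next step is to show that $a$ was actually executed at some past expansion of $n_j$. For $j=0$ this is immediate, since $\prune$ prunes only by the last action and $I$ has no predecessor. For $j\geq 1$, by the definition of $\Delta$ some action $a^-$ of $P$ leading to $n_j$ was present in $A(n_j)$ when $n_j$ was last closed; since $P$ is non-$\prune$-pruned, $a$ is allowed to follow $a^-$, so $a$ was run either at the first expansion of $n_j$ (if $a^-\in A$ already then) or, via the ``newly-allowed, previously-pruned'' rule, at the reopening expansion triggered by the later addition of $a^-$. At the moment $a$ was applied, $n_{j+1}$ entered the open list with $g(n_{j+1})=g(n_j)+\mathrm{cost}(a)=g^*(n_{j+1})$ and with $a$ in its action-set.

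Finally, I invoke a monotonicity invariant for $n_{j+1}$: since $g^*$ is the minimum, no later re-discovery of $n_{j+1}$ can lower its $g$-value, and \ppastar's tie-breaking rule only \emph{adds} actions when a state is reached again with the same $g$; hence $a$ stays in the action-set of $n_{j+1}$ thereafter. If $n_{j+1}$ is currently open, its open node $(A',n_{j+1})$ is the required witness; if $n_{j+1}$ were instead currently closed, it would sit in $\Delta$ and contradict the maximality of $j$. The hardest part, I expect, is the bookkeeping in step three: making precise that $a$ really is executed somewhere in the (possibly interleaved) sequence of initial expansions, reopenings, and equal-$g$ revisits that $n_j$ undergoes, and that ``once $a$ is in $A(n_{j+1})$, $a$ stays in $A(n_{j+1})$'' holds under every combination of \ppastar's update rules.
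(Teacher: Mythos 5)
Your proposal is correct and follows essentially the same route as the paper's proof: define $\Delta$ as the closed nodes optimal on $P$, take the one of maximal index $n_j$, argue that the $P$-action $a$ out of $n_j$ must have been executed (because $P$ is non-$\prune$-pruned and $A(n_j)$ contains a $P$-action leading to $n_j$), observe that $a$ and the optimal $g$-value persist in the node for $n_{j+1}$ since equal-$g$ revisits only add actions, and conclude $n_{j+1}$ must be open by maximality of $j$. Your treatment of the first-expansion-versus-reopening case split is in fact more explicit than the paper's, which glosses over that bookkeeping, but the argument is the same.
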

\begin{proof}
Let $P$ be an optimal non-$\prune$-pruned path from $I$ to $s_k$. If I is in the open list, let $s'=I$ and the lemma is trivially true since $g(I)=g^*(I)=0$. Suppose $I$ is closed. Let $\Delta$ be the set of all nodes $(A_i,s_i)$ optimal on $P$, that were closed.  $\Delta$ is not empty, since by assumption, $I$ is in $\Delta$. Let the nodes in $\Delta$ be ordered such that $s_i \prec_P s_j$ for $i<j$, and  let $j$ be the highest index of any $s_i$ in $\Delta$.

Since the closed node $(A_j,s_j)$ has an optimal $g$ value, it had been expanded prior to closing. From the properties of \ppastar, it follows that the expansion of $(A_j,s_j)$, which is optimal on $P$, is followed with an attempt to generate a node $(A_{j+1},s_{j+1})$ which is optimal on $P$ as well. Generation of $(A_{j+1},s_{j+1})$ must be allowed, since under the highest index assumption there can be no closed node containing $s$ which is optimal on $P$. Naturally, $s_j \prec_P s_{j+1}$. 

At this point, we note that actions in $A_{j+1}$ cannot be removed by any competing path from $I$ to $s_{j+1}$, since $(A_{j+1},s_{j+1})$ has an optimal $g$ value. It is possible, though, that additional actions leading to $s_{j+1}$ are added to the node. The updated node can be represented by $(A'_{j+1} \supseteq A_{j+1},s_{j+1})$, and the property of optimality on $P$ holds. Additionally, node $(A'_{j+1},s_{j+1})$ cannot be closed after its generation, since again, this contradicts the highest index property. Hence, there exists an open list node $(A',s')$ which is optimal on $P$. This concludes the proof.
\end{proof}
\begin{corollary} \label{ma-completeness}
If $h$ is admissible and $\prune$ is optimality-preserving, \ppastar~using $\prune$ is optimal. 
\end{corollary}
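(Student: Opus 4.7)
The plan is to mimic the classical proof of \astar's optimality, using Lemma \ref{lem:ppastar-exists-g} as the key ingredient in place of the standard ``there is always an open node on an optimal path'' lemma.

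First I would fix an optimal plan $\pi^\star$ for $\Pi$ of cost $C^\star = h^\ast(I)$, and use the hypothesis that $\prune$ is optimality-preserving to assert the existence of at least one optimal plan $P^\star$ whose action sequence is not pruned by $\prune$. This is precisely the role played by Corollary \ref{lem:restriction-on-P} for the case $\prune = $ PB-pruning, but here we invoke the assumption in \ppastar's specification: $\prune$ preserves at least one optimal path.

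Next I would argue that \ppastar~cannot expand a goal node whose $g$-value exceeds $C^\star$. Suppose for contradiction that at some iteration \ppastar~selects a goal node $(A_g, s_g)$ with $g(s_g) > C^\star$ for expansion. At that very moment the goal is not yet closed. By Lemma \ref{lem:ppastar-exists-g}, applied to $P^\star$ (which is a non-$\prune$-pruned optimal path from $I$ to the goal), there exists an open list node $(A',s')$ which is optimal on $P^\star$, so $g(s') = g^\ast(s')$. Admissibility of $h$ then gives
\begin{displaymath}
f(s') \;=\; g(s') + h(s') \;\leq\; g^\ast(s') + h^\ast(s') \;=\; C^\star \;<\; g(s_g) \;=\; f(s_g),
\end{displaymath}
where the last equality holds because $s_g$ is a goal. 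This contradicts \ppastar~having chosen $s_g$ as the minimum-$f$ node in the open list. Hence the first goal node ever expanded has $g$-value at most $C^\star$, and since $C^\star$ is by definition the minimum achievable cost, this $g$-value equals $C^\star$ and \ppastar~returns an optimal plan.

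The remaining obligation is termination. Here the main subtlety (and what I expect to be the trickiest part) is \ppastar's non-standard reopening rule: a closed node may be reopened merely because a new action leading to it with the same $g$-value has been discovered, not only because a strictly smaller $g$-value has been found. I would bound reopenings by observing that the set $A$ in an open node $(A,s)$ is monotonically non-decreasing in reopenings at a fixed $g$-value, is drawn from the finite action set, and $g$-values themselves are monotonically non-increasing for a fixed state and bounded below by $g^\ast(s)$. Consequently each state can be reopened only finitely many times, the state space is finite, so the algorithm performs only finitely many expansions before exhausting the open list or expanding a goal. Combined with the previous paragraph, this establishes that \ppastar~terminates and, when it does, returns a cost-optimal plan.
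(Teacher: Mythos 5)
Your proof is correct and takes essentially the same route as the paper: the paper's own proof of this corollary is a one-sentence appeal to Lemma \ref{lem:ppastar-exists-g} together with the optimality-preserving property of $\prune$, and you have simply filled in the standard \astar-style details (an optimality-preserving $\prune$ leaves some optimal path unpruned, the lemma keeps a node with $f\leq C^\star$ on that path open, so no goal with $g>C^\star$ can ever be selected). Your explicit handling of termination under the relaxed reopening rule is a welcome addition that the paper leaves implicit, and your bounding argument for the number of reopenings is sound.
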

\begin{proof} 
This follows directly from Lemma \ref{lem:ppastar-exists-g}, the optimality preserving property of $\prune$ and the properties of \ppastar, which allow every optimal, non-$\prune$-pruned path to be generated.
\end{proof}

\subsection{Empirical Analysis of PB-Pruning}
\begin{table}[t]\footnotesize
\centering 
\caption{Comparison of centralized \astar\ with and without partition-based pruning, and \masas\ running on multiple processors. Running time, number of expanded nodes, and \masas's efficiency w.r.t.\ both centralized configurations are shown.} 
\label{exp:pruning}
\begin{tabular}{|l|c||r|r|r||r|r||r|r|r|} \hline
 & &  \multicolumn{3}{|c||}{Time} &  \multicolumn{2}{|c||}{Efficiency} & \multicolumn{3}{|c|}{Expansions}   \\ 
problem	&	agents	&	\astar	&	$\textsc{a}^\star_{pb}$	&	\masas	&	\astar	&	$\textsc{a}^\star_{pb}$	&	\astar	&	$\textsc{a}^\star_{pb}$	&	\masas	\\	\hline
Logistics4-0	&	3	&	0.07	&	0.06	&	0.03	&	0.78	&	0.67	&	21	&	21	&	2496	\\	
Logistics5-0	&	3	&	0.16	&	0.17	&	0.13	&	0.41	&	0.44	&	28	&	28	&	11020	\\	
Logistics6-0	&	3	&	0.29	&	0.29	&	0.15	&	0.64	&	0.64	&	547	&	527	&	9722	\\	
Logistics7-0	&	4	&	1.42	&	1.07	&	8.26	&	0.04	&	0.03	&	29216	&	22425	&	520122	\\	
Logistics8-0	&	4	&	1.28	&	1.07	&	2.89	&	0.11	&	0.09	&	16771	&	11750	&	157184	\\	
Logistics9-0	&	4	&	2.17	&	1.59	&	11.6	&	0.05	&	0.03	&	43283	&	29953	&	687572	\\	
Logistics10-0	&	5	&	132	&	37	&	X	&	0	&	0	&	4560551	&	2132416	&	X	\\	
Logistics11-0	&	5	&	180	&	57.4	&	X	&	0	&	0	&	5713287	&	2980725	&	X	\\	\hline
Rovers3	&	2	&	0.2	&	0.2	&	0.11	&	0.91	&	0.91	&	12	&	12	&	86	\\	
Rovers4	&	2	&	0.07	&	0.06	&	0.04	&	0.88	&	0.75	&	9	&	9	&	121	\\	
Rovers5	&	2	&	8.24	&	3.07	&	4.4	&	0.94	&	0.35	&	213079	&	62672	&	307995	\\	
Rovers6	&	2	&	X	&	164.9	&	301	&	$\infty$	&	0.27	&	X	&	8107327	&	18274357	\\	
Rovers7	&	3	&	32.4	&	5.22	&	6.82	&	1.58	&	0.26	&	1172964	&	235537	&	676750	\\	
Rovers12	&	4	&	190.8	&	10.1	&	27.6	&	1.73	&	0.09	&	4963979	&	391372	&	2591428	\\	\hline
Satellites3	&	2	&	0.3	&	0.29	&	0.29	&	0.52	&	0.50	&	12	&	12	&	1498	\\	
Satellites4	&	2	&	0.6	&	0.58	&	0.4	&	0.75	&	0.73	&	18	&	18	&	5045	\\	
Satellites5	&	3	&	16.83	&	3.15	&	3.9	&	1.44	&	0.27	&	236647	&	23503	&	42557	\\	
Satellites6	&	3	&	1.93	&	1.84	&	0.92	&	0.70	&	0.67	&	1382	&	385	&	81731	\\	
Satellites7	&	4	&	X	&	26.6	&	18.26	&	$\infty$	&	0.36	&	X	&	846394	&	1303910	\\	\hline
Transport1	&	2	&	0.02	&	0.01	&	0.08	&	0.13	&	0.06	&	6	&	6	&	285	\\	
Transport2	&	2	&	0.17	&	0.16	&	0.21	&	0.40	&	0.38	&	242	&	225	&	1628	\\	
Transport3	&	2	&	1.35	&	0.9	&	20.64	&	0.03	&	0.02	&	69500	&	49744	&	546569	\\	
Transport4	&	2	&	54.6	&	29.8	&	335.15	&	0.08	&	0.04	&	3527592	&	2589496	&	4397124	\\	\hline
Zenotravel3	&	2	&	0.33	&	0.34	&	0.42	&	0.39	&	0.40	&	7	&	7	&	516	\\	
Zenotravel4	&	2	&	0.32	&	0.33	&	0.43	&	0.37	&	0.38	&	9	&	9	&	762	\\	
Zenotravel5	&	2	&	0.3	&	0.3	&	0.42	&	0.36	&	0.36	&	14	&	14	&	577	\\	
Zenotravel6	&	2	&	0.47	&	0.47	&	0.61	&	0.39	&	0.39	&	270	&	220	&	1280	\\	
Zenotravel7	&	2	&	0.55	&	0.56	&	0.8	&	0.34	&	0.35	&	621	&	433	&	5681	\\	
Zenotravel8	&	3	&	1.22	&	1.21	&	1.71	&	0.24	&	0.24	&	136	&	126	&	5461	\\	
Zenotravel9	&	3	&	31.7	&	12.88	&	315	&	0.03	&	0.01	&	775340	&	474180	&	6745088	\\	
Zenotravel10	&	3	&	9.3	&	6.51	&	338	&	0.01	&	0.01	&	116872	&	104340	&	4416461	\\	
Zenotravel11	&	3	&	2.99	&	2.02	&	11.7	&	0.09	&	0.06	&	20157	&	11565	&	200868	\\	
Zenotravel12	&	3	&	X	&	82.95	&	X	&	N/A	&	0	&	X	&	2406708	&	X	\\	\hline
\end{tabular}
\end{table}

We set out to check the effect of \masas's inherent exploitation of symmetry on its efficiency compared to \astar. The hypothesis that this is \masas's main advantage over \astar\ is well supported by the results in Table \ref{exp:pruning}, which shows a comparison of \masas\ and centralized \astar\ using PB pruning. Here, we see that in all problems where \masas\ achieves superlinear speedup w.r.t.\ \astar, applying partition-based pruning where \textit{partition=agent} reduces runtime and expansions dramatically. In all cases, \masas's efficiency w.r.t.\ \astar\ using PB pruning is sublinear. This is, of course, also due to the fact that \masas\ solves a more difficult problem -- having incomplete information has a negative effect on the quality of heuristics computed by the agents.

Finally, we note that although MA structure is evident in some benchmark planning domains (e.g.\ Logistics, Rovers, Satellites, Zenotravel etc.), in general there isn't always an obvious way of decomposing the problem. In work further exploring PB pruning \cite{NissimAB12}, we describe an automated method for decomposing a general planning problem, making PB pruning applicable in the general setting.

\section{Discussion}
\label{discussion}

We presented a formulation of heuristic forward search for distributed systems that respects the natural distributed structure of the
system. \mafs\ dominates the state-of-the-art w.r.t.\ runtime and communication, as well as solution quality in most cases. In this class of
privacy preserving algorithms, \masas, is the first cost-optimal distributed planning algorithm, and it is competitive with its centralized counterpart, despite having partial information. Our work raises a number of research challenges and opportunities, which we now discuss.

\subsection{Accurate Heuristics Given Incomplete Information}
The empirical results presented lead us to what is perhaps the greatest practical challenge suggested by \mafs\ and \masas\ --  computing an accurate heuristic in a distributed (privacy-preserving) system. In some domains, the existence of private information that is not shared leads to serious deterioration in the quality of the heuristic function, greatly increasing the number of nodes expanded, and\slash or affecting solution quality. We believe that there are techniques that can be used to alleviate this problem. As a simple example, consider a public action $a_{\mathrm{pub}}$ that can be applied only after a private action $a_{\mathrm{priv}}$. For example, in the rover domain, a \textit{send} message can only be applied after various private actions required to collect data are executed. If the cost of $a_{\mathrm{pub}}$ known to other agents would reflect the cost of $a_{\mathrm{priv}}$ as well, the heuristic estimates would be more accurate. Another possibility for improving heuristic estimates is using an additive heuristic.  In that case, rather than taking the maximum of the agent's own heuristic estimate and the estimate of the sending agent, the two could be added. To maintain admissibility, this would require using something like cost partitioning \cite{KatzD08}. 
One obvious way of doing this would be to give each agent the full cost of its actions and zero cost for other actions.  The problem with this approach is that initially, when the state is generated and the only estimate available is that of the generating agent, this estimate is very inaccurate, since it assigns 0 to all other actions. In fact, the agent will be inclined to prefer actions performed by other agents, as they appear very cheap, and we see especially poor results in domains where different agents can achieve the same goal, as in the Rovers domain, resulting in estimates of 0 for many non-goal states. Therefore, how to effectively compute accurate heuristics in the distributed setting remains an open challenge.

\subsection{Secure Multi-Party Computation}
Secure Multi-Party Computation \cite{Yao82b,Yao86} is a subfield of Cryptography which relates closely to distributed planning, as well as to distributed problem solving in general. The goal of methods for secure multi-party computation is to enable multiple agents to compute a function over their inputs, while keeping these inputs private. More specifically, agents $\varphi_1,\dots,\varphi_n$, having \emph{private} data $x_1,\dots,x_n$, would like to jointly compute some function $f(x_1,\dots,x_n)$, without revealing any information about their private information, other than what can be reasonably deduced from the value of $f(x_1,\dots,x_n)$. 

While in principle, it appears that these techniques can be extended to our setting of distributed planning, their complexity quickly becomes unmanageable. For example, a common approach for secure multiparty computation uses cryptographic circuits. When solving the shortest path problem (e.g., network routing, Gupta et al., 2012\nocite{GuptaSPSSFRS12}), the size of the circuits created is polynomial in the size of the graph. In our setting the function $f$ computes a shortest path in the implicit graph induced by the descriptions of the agents' actions. As this graph is exponential in the problem description size, it quickly becomes infeasible to construct these circuits given time and memory limitations. While it is true that planning is NP-hard and forward search algorithms do, in general, require exponential time/memory, the purpose of heuristic search is to reduce the search space and to solve large problems in low-polynomial time. Requiring the construction of exponential-sized circuits {\em a-priori}  contradicts the goal of efficiency and feasibility. Another difference between our model and the ones used for secure multiparty computation, is that these methods assume that some ($\geq 1$) of the agents are honest, and the other agents are adversaries which are determined to uncover the private information. 
In distributed planning, the distinction between honest agents and adversaries is not as clear-cut. Despite faithfully participating in the distributed protocol, all agents might benefit from discovering other agents' private information (e.g., competing companies or contractors), and therefore can all be viewed as adversaries where privacy is concerned. Therefore, the assumptions usually made for secure multiparty computation regarding the limited number of adversaries do not fit our models as well.

\subsection{Privacy}
\label{sec:discussion_privacy}
Work in distributed CSPs \cite{YokooSH02,SilaghiM04} identified that although a key motivation for distributed computation is preservation of agent privacy, some private information may leak during the search process. For example, in DisCSP each agent has a single variable, and there exist both binary and unary constraints. Binary constraints are public since more than one agent knows of their existence, while unary constraints are considered private information. In meeting scheduling, an agent has a single variable whose values are possible meeting time slots. A binary constraint could be an equality constraint between the values of two variables belonging to different agents, while a unary constraint represents slots in which the agent cannot hold meetings. During search, whenever an agent sends some assignment of its variable to other agents, they can deduce that that value has no unary constraint forbidding it. If this value does not end up being assigned in the solution, the agent revealed some private information that could not have been deduced from only viewing the solution. In the field of DisCSPs, there has been work focusing of how to \emph{measure} this privacy loss \cite{FranzinRFW04,MaheswaranPBVT06}, as well as work on analyzing how much information specific algorithms lose \cite{GreenstadtPT06}. More recently, further work has emerged on how to alter existing DisCSP algorithms to handle stricter privacy demands \cite{GreenstadtGS07,LeauteF09}.

In our model of distributed planning, things are a bit different. To consider privacy loss during search, first we must examine what type of information could leak during  distributed search. Our model considers the private preconditions of public actions, private actions, and private
action costs as private information which the agents do not want to disclose. 

We begin by discussing the cost of private actions. When running \masas, messages sent by the agents contain $g$-values, or the currently minimal cost of arriving at a state. Given this information throughout the search procedure, the agents can deduce an upper bound on the minimal cost of applying public action $a$, given a \emph{public} state $s$. Consider an example of a system consisting of two agents $\varphi_{1,2}$. During the search procedure, $\varphi_1$ sends public state $s$ to $\varphi_2$ multiple times, each with different private states, which are indistinguishable to $\varphi_2$, by applying methods discussed in Section \ref{sec:privacy}. Upon receiving $s$, $\varphi_2$ continues searching until applying public action $a$, and then sending the resulting state $s'$ back to $\varphi_1$, which can now compute $g(s')-g(s)$, or the total (including private) cost of applying $a$. If $\varphi_1$ minimizes this value for every $s'$, it can now deduce an upper bound on the minimal cost of applying $a$ given public state $s$. 

Another possible leak of information can be the existence of private actions or private preconditions of public actions. These affect whether or not a public action can be applied at certain states. When running \mafs, the first bit of information which can easily be deduced is whether a public action is applicable in some reachable state. Clearly, if an agent sends a state for which the creating operator is public action $a$, then other agents now know that there exist some reachable state in which $a$ is applicable. However, this information is apparent from
the public description of public actions, and hence is not private. 

However, there exists a potentially more serious leak of information. Given all the knowledge accumulated during the search process, agents can attempt to recreate some model of other agent's private states, and the possible transitions between these private states. For example, given every public state, agents can see which actions were applicable and which actions where not applicable. If the actions are different, agents can deduce that the states are different. This information can later be used in order to reconstruct a model of the agent's private state using techniques for learning with hidden values or techniques for learning hidden states. Of course, there is no guarantee that this information will be correct or useful, obtaining it requires collaboration between different agents (that need to share which public states they received from the agent), and algorithms for learning in the context of hidden variables/states can be weak. Nevertheless, clearly some information could leak.

The discussion above indicates that careful investigation of information leaks and development of algorithms that have better privacy guarantees is an important avenue for future research. First, it would be interesting to see work that empirically investigates the significance of privacy loss. For example, our empirical results indicate that many problems can be solved quickly using distributed forward search, without expanding too many nodes. Is it possible to build reasonable models of agent's private states in such cases? 

Second, one can develop variants of current algorithms that have stronger privacy preserving properties.
For example, consider the problem of inferring upper bounds on the cost the minimal cost of applying action $a$ in public state $s$. In general, private actions which achieve preconditions of a public action do not have to be applied immediately before \emph{that} public action -- an agent can perform some of the private actions required for a public action before a previous public action. In other words, an agent can ``distribute'' the private cost of a public action between different ``segments'', or parts of the plan between two public actions, making the cost of the first action appear higher and the cost of the second action lower, although with some potential impact on optimality.
In the case of non-optimal search, $g$-values are not disclosed, so this is not an issue.

The above example illustrates a general idea: one can trade-off efficiency for privacy. A similar tradeoff is explored in the area of {\em differential privacy}~\cite{Dwork06}. There, some noise is inserted into a database before statistical queries are evaluated, such that the answer to the statistical query is correct to within some given tolerance, $\epsilon$, yet one cannot infer information about a particular entry in the database (e.g., describing the medical record of an individual). Similarly, in our context, one can consider algorithms in which agents refrain from sending certain public states with some probability, or send it with some random delay, or even possibly, generate bogus, intermediate public states. Such changes are likely to have some impact on running time and solution quality, and these tradeoffs would be interesting to explore. 

We believe that as this area matures, much like in the area of DisCSP, more attention will be given to the problem of precise quantification of privacy and privacy loss. Our work brings us closer to this stage. It offers algorithms for distributed search that start to match that of centralized search, and perhaps more importantly, a general methodology for distributed forward search that respects the natural distributed structure of the system,
that can form a basis for such extensions.

\vskip 0.2in
\bibliography{cites}
\bibliographystyle{theapa}

\end{document}